
\documentclass[11pt]{article}

\oddsidemargin 0in    
\evensidemargin 0in
\topmargin -0.5in
\textheight 8.5 true in       
\textwidth 6.5 true in        
\date{}








\usepackage[utf8]{inputenc} 
\usepackage[T1]{fontenc}    
\usepackage{hyperref}       
\usepackage{url}            
\usepackage{booktabs}       
\usepackage{amsfonts}       
\usepackage{nicefrac}       
\usepackage{microtype}      
\usepackage[table]{xcolor}
\usepackage{comment}
\usepackage{amsmath}
\usepackage{amsthm}
\usepackage{graphicx}
\usepackage{rotating}

\makeatletter
\newtheorem*{rep@theorem}{\rep@title}
\newcommand{\newreptheorem}[2]{%
\newenvironment{rep#1}[1]{%
 \def\rep@title{#2 \ref{##1}}%
 \begin{rep@theorem}}%
 {\end{rep@theorem}}}
\makeatother

\newtheorem{theorem}{Theorem}
\newtheorem{proposition}[theorem]{Proposition}
\newtheorem{corollary}[theorem]{Corollary}
\newtheorem{lemma}[theorem]{Lemma}
\newtheorem{assumption}{Assumption}

\newreptheorem{theorem}{Theorem}
\newreptheorem{lemma}{Lemma}
\newreptheorem{corollary}{Corollary}
\newreptheorem{assumption}{Assumption}

\DeclareMathOperator*{\argmin}{arg\,min}

\title{Stochastic Mirror Descent on Overparameterized Nonlinear Models: Convergence, Implicit Regularization, and Generalization}

%

\author{%
  Navid Azizan$^1$, Sahin Lale$^2$, Babak Hassibi$^2$\\
  $^1$~Department of Computing and Mathematical Sciences\\
  $^2$~Department of Electrical Engineering\\
  California Institute of Technology\\
  \texttt{\{azizan,alale,hassibi\}@caltech.edu} \\
}

\begin{document}

\maketitle

\begin{abstract}
Most modern learning problems are highly \emph{overparameterized}, meaning that there are many more parameters than the number of training data points, and as a result, the training loss may have infinitely many global minima (parameter vectors that perfectly interpolate the training data).
Therefore, it is important to understand which interpolating solutions we converge to, how they depend on the initialization point and the learning algorithm, and whether they lead to different generalization performances. In this paper, we study these questions for the family of stochastic mirror descent (SMD) algorithms, of which the popular stochastic gradient descent (SGD) is a special case.
Our contributions are both theoretical and experimental. On the theory side, we show that in the overparameterized \emph{nonlinear} setting, if the initialization is close enough to the manifold of global minima (something that comes for free in the highly overparameterized case),
SMD with sufficiently small step size converges to a global minimum that is approximately \emph{the closest one in Bregman divergence}.
On the experimental side, our extensive experiments on standard datasets and models, using various initializations, various mirror descents, and various Bregman divergences, consistently confirms that this phenomenon happens in deep learning.
Our experiments further indicate that there is a clear difference in the generalization performance of the solutions obtained by different SMD algorithms. Experimenting on a standard image dataset and network architecture with SMD with different kinds of \emph{implicit regularization}, $\ell_1$ to encourage sparsity, $\ell_2$ yielding SGD, and $\ell_{10}$ to discourage large components in the parameter vector, consistently and definitively shows that $\ell_{10}$-SMD has better generalization performance than SGD, which in turn has better generalization performance than $\ell_1$-SMD. This surprising, and perhaps counter-intuitive, result strongly suggests the importance of a comprehensive study of the role of regularization, and the choice of the best regularizer, to improve the generalization performance of deep networks. 
\end{abstract}

\section{Introduction}
Deep learning has demonstrably enjoyed a great deal of success in a wide variety of tasks \cite{amodei2016deep,graves2013speech,krizhevsky2012imagenet,mnih2015human,silver2016mastering,wu2016google,lecun2015deep}. Despite its tremendous success, the reasons behind the good performance of these methods on unseen data is not fully understood (and, arguably, remains somewhat of a mystery). While the special deep architecture of these models seems to be important to the success of deep learning, the architecture is only part of the story, and it has been now widely recognized that the optimization algorithms used to train these models, typically stochastic gradient descent (SGD) and its variants, also play a key role in learning parameters that generalize well.

Since these deep models are \emph{highly overparameterized}, they have a lot of capacity, and can fit to virtually any (even random) set of data points \cite{zhang2016understanding}. In other words, these highly overparameterized models can ``interpolate'' the data, so much so that this regime has been called the ``interpolating regime'' \cite{mababe2017}. In fact, on a given dataset, the loss function typically has (infinitely) many \emph{global minima}, which however can have drastically different generalization properties (many of them perform very poorly on the test set). Which minimum among all the possible minima we choose in practice is determined by the initialization and the optimization algorithm that we use for training the model.

Since the loss functions of deep neural networks are non-convex and sometimes even non-smooth, in theory, one may expect the optimization algorithms to get stuck in local minima or saddle points. In practice, however, such simple stochastic descent algorithms almost always reach \emph{zero training error}, i.e., a \emph{global minimum} of the training loss \cite{zhang2016understanding,lee2016gradient}. 
More remarkably, even in the absence of any explicit regularization, dropout, or early stopping \cite{zhang2016understanding}, the global minima obtained by these algorithms seem to generalize quite well to unseen data (contrary to many other global minima). It has been also observed that even among different optimization algorithms, i.e., SGD and its variants
, there is a discrepancy in the solutions achieved by different algorithms and their generalization capabilities \cite{wilson2017marginal}. 
Therefore, it is important to ask the question
\begin{center}
\textit{Which global minima do these algorithms converge to?}
\end{center}

In this paper, we study the family of stochastic mirror descent (SMD) algorithms, which includes the popular SGD algorithm. For any choice of potential function, there is a corresponding mirror descent algorithm. We show that, for overparameterized nonlinear models, if one initializes close enough to the manifold of parameter vectors that interpolates the data, then the SMD algorithm for any particular potential converges to a global minimum that is approximately 
\textbf{\emph{the closest one to the initialization, in Bregman divergence corresponding to the potential.}}
Furthermore, in highly overparameterized models, this closeness of the initialization comes for free, something that is occasionally referred to as ``the blessing of dimensionality.''
For the special case of SGD, this means that it converges to a global minimum which is approximately the closest one to the initialization in the usual Euclidean sense.

We perform extensive systematic experiments on various initializations, various mirror algorithms for the MNIST and CIFAR-10 datasets using the existing off-the-shelf deep neural network architectures, and we measure all the pairwise distances in different Bregman divergences. We found that every single result is exactly consistent with the hypothesis. Indeed, in all our experiments, \textbf{\emph{the global minimum achieved by any particular mirror descent algorithm is the closest, among all other global minima obtained by other mirrors and other initializations, to its initialization in the corresponding Bregman divergence.}} In particular, the global minimum obtained by SGD from any particular initialization is closest to the initialization in Euclidean sense, both among the global minima obtained by different mirrors and among the global minima obtained by different initializations. 

This result further implies that, even in the absence of any explicit regularization, these algorithms perform an \emph{implicit regularization} \cite{neyshabur2017geometry,ma2017implicit,gunasekar2017implicit,gunasekar2018characterizing,azizan2019stochastic,soudry2017implicit}. In particular, when initialized around zero, SGD acts as an $\ell_2$ regularizer. Similarly, by choosing other mirrors, one can obtain different implicit regularizers (such as $\ell_1$ or $\ell_{\infty}$), which may have different performances on test data. This raises the question
\begin{center}
\textit{How well do different mirrors perform in practice?}
\end{center}

Perhaps, one might expect an $\ell_1$ regularizer to perform better, due to the fact that it promotes sparsity, and ``pruning'' in neural networks is believed to be helpful for generalization. On the other hand, one may expect SGD ($\ell_2$ regularizer) to work best among different mirrors, because typical architectures have been tuned for and tailored to SGD.
We run experiments with four different mirror descents, i.e., $\ell_1$ (sparse), $\ell_2$ (SGD), $\ell_3$, and $\ell_{10}$ (as a surrogate for $\ell_{\infty}$), 
on a standard off-the-shelf deep neural network architecture for CIFAR-10. \textbf{\emph{Somewhat counter-intuitively, our results for test errors of different mirrors consistently and definitively show that the $\ell_{10}$ regularizer performs better than the other mirrors including SGD, while $\ell_1$ consistently performs worse.}} This flies in the face of the conventional wisdom that sparser weights (which are obtained by an $\ell_1$ regularizer) generalize better, and suggests that $\ell_{\infty}$, which roughly speaking penalizes all the weights uniformly, may be a better regularizer for deep neural networks.

In Section~\ref{sec:background}, we review the family of mirror descent algorithms and briefly revisit the linear overparameterized case. Section~\ref{sec:theory} provides our main theoretical results, which are (1) convergence of SMD, under reasonable conditions, to a global minimum, and (2) proximity of the obtained global minimum to the closest point from initialization in Bregman divergence. Our proofs are remarkably simple and are based on a powerful \emph{fundamental identity} that holds for all SMD algorithms in a general setting. We comment on the related work in Section~\ref{sec:related_work}.
In Section~\ref{sec:experiments}, we provide our experimental results, which consists of two parts, (1) testing the theoretical claims about the distances for different mirrors and different initializations, and (2) assessing the generalization properties of different mirrors. The proofs of the theoretical results and more details on the experiments are relegated to the appendix.

\section{Background and Warm-Up}\label{sec:background}

\subsection{Preliminaries}
Let us denote the training dataset by $\{(x_i,y_i): i=1,\dots,n\}$, where $x_i\in\mathbb{R}^d$ are the inputs, and $y_i\in\mathbb{R}$ are the labels.
The model (which can be linear, a deep neural network, etc.) is defined by the general function $f(x_i,w)=f_i(w)$ with some parameter vector $w\in\mathbb{R}^p$.
Since typical deep models have a lot of capacity and are highly overparameterized, we are particularly interested in the overparameterized (so-called interpolating) regime, where $p>n$ (often $p\gg n$). In this case, there are many parameter vectors $w$ that are consistent with the training data points. We denote the set of these parameter vectors by
\begin{equation}
\mathcal{W}=\{w\in\mathbb{R}^p\ |\ f(x_i,w)=y_i, i=1,\dots,n\}
\end{equation}
This a high-dimensional set (e.g. a $(p-n)$-dimensional manifold) in $\mathbb{R}^p$ and depends only on the training data $\{(x_i,y_i): i=1,\dots,n\}$ and the model $f(\cdot,\cdot)$.

The total loss on the training set (empirical risk) can be expressed as
$
L(w) = \sum_{i=1}^n L_i(w),
$
where $L_i(\cdot)=\ell(y_i,f(x_i,w))$ is the loss on the individual data point $i$, and  $\ell(\cdot,\cdot)$ is a differentiable non-negative function, with the property that $\ell(y_i,f(x_i,w)) = 0$ iff $y_i = f(x_i,w)$. Often $\ell(y_i,f(x_i,w)) = \ell(y_i-f(x_i,w))$, with $\ell(\cdot)$ convex and having a global minimum at zero (such as square loss, Huber loss, etc.). In this case,
$
L(w) = \sum_{i=1}^n \ell(y_i-f(x_i,w)).
$

$\mathcal{W}$ is the set of global minima, and every parameter vector $w$ in $\mathcal{W}$ renders the loss on each data point zero, i.e., $L_i(w) = 0\ \forall i$. The loss function is often attempted to be minimized by stochastic gradient descent (SGD), which is defined as
\begin{equation}\label{eq:SGD}
w_i=w_{i-1}-\eta\nabla L_i(w_{i-1}),\quad i\geq 1
\end{equation}
assuming the data is indexed randomly (for $i>n$, one can cycle through the data or select them at random). 

\subsection{Stochastic Mirror Descent}
Stochastic mirror descent (SMD), first introduced by Nemirovski and Yudin \cite{nemirovski1983problem}, is one of the most widely used families of algorithms for stochastic optimization \cite{beck2003mirror,cesa2012mirror,zhou2017stochastic}, which includes the popular stochastic gradient descent (SGD) as a special case.
Consider a strictly convex differentiable function $\psi(\cdot)$, called the \emph{potential function}. Then SMD updates are defined as
\begin{equation}\label{eq:SMD}
\nabla\psi(w_i)=\nabla\psi(w_{i-1})-\eta\nabla L_i(w_{i-1}) .
\end{equation}
Note that, due to the strict convexity of $\psi(\cdot)$, the gradient $\nabla\psi(\cdot)$ defines an invertible map, so the recursion in \eqref{eq:SMD} yields a unique $w_i$ at each iteration, and thus is a well-defined update. Compared to classical SGD, rather than update the weight vector along the direction of the negative gradient, the update is done in the ``mirrored'' domain determined by the invertible transformation $\nabla\psi(\cdot)$. Mirror descent was originally conceived to exploit the geometrical structure of the problem by choosing an appropriate potential. Note that SMD reduces to SGD when $\psi(w) = \frac{1}{2}\|w\|^2$, since the gradient $\nabla\psi(\cdot)$ is simply the identity map.


Alternatively, the update rule \eqref{eq:SMD} can be expressed as
\begin{equation}
w_i = \argmin_w\ \eta w^T\nabla L_i(w_{i-1})+D_{\psi}(w,w_{i-1}) ,
\end{equation}
where
\begin{equation}
D_{\psi}(w,w_{i-1}):=\psi(w)-\psi(w_{i-1})-\nabla\psi(w_{i-1})^T (w-w_{i-1})
\end{equation}
is the Bregman divergence with respect to the potential function $\psi(\cdot)$. Note that $D_\psi(\cdot,\cdot)$ is non-negative, convex in its first argument, and that, due to strict convexity, $D_\psi(w,w') = 0$ iff $w=w'$.

Different choices of the potential function $\psi(\cdot)$ yield different optimization algorithms, which will potentially have different implicit biases. A few examples follow.

\textbf{Gradient Descent.} For the potential function $\psi(w)=\frac{1}{2}\|w\|^2$, the Bregman divergence is $D_{\psi}(w,w')=\frac{1}{2}\|w-w'\|^2$, and the update rule reduces to that of SGD.

\textbf{Exponentiated Gradient Descent.} For $\psi(w)=\sum_j w_j\log w_j$, the Bregman divergence becomes the unnormalized relative entropy (Kullback-Leibler divergence) $D_{\psi}(w,w')=\sum_j w_j\log\frac{w_j}{w'_j}-\sum_j w_j + \sum_j w'_j$, which corresponds to the exponentiated gradient descent (aka the exponential weights) algorithm \cite{kivinenwarmuth}.

\textbf{$p$-norms Algorithm.} For any $q$-norm squared potential function $\psi(w)=\frac{1}{2}\|w\|_q^2$, with $\frac{1}{p}+\frac{1}{q}=1$, the algorithm will reduce to the so-called $p$-norms algorithm \cite{grove2001general,gentile2003robustness}.

\textbf{Sparse Mirror Descent.} For $\psi(w)=\|w\|_{1+\epsilon}^{1+\epsilon}$, the algorithm reduces to sparse mirror descent, which is used in compressed sensing \cite{azizan2019stochastic,azizan2019characterization}.


\subsection{Overparameterized Linear Models}
Overparameterized (or underdetermined) linear models have been recently studied in many papers due to their simplicity, and there are interesting insights than one can obtain from them. In this case, the model is $f(x_i,w)=x_i^Tw$, the set of global minima is $\mathcal{W} = \left\{w\mid y_i=x_i^Tw,\ i=1,\dots,n\right\}$, and the loss is $L_i(w)=l(y_i-x_i^Tw)$. The following result characterizes the solution that SMD converges to, in the linear overparameterized setting \cite{azizan2019stochastic,gunasekar2018characterizing}.

\begin{proposition}\label{prop:convergence_SMD}
Consider a linear overparameterized model. For sufficiently small step size, i.e., for any $\eta>0$ for which $\psi(\cdot)-\eta L_i(\cdot)$ is convex, and for any initialization $w_0$, the SMD iterates converge to
\begin{equation*}
w_{\infty}=\argmin_{w\in\mathcal{W}} D_{\psi}(w,w_0) .
\end{equation*}
\end{proposition}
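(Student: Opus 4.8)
The plan is to run a Lyapunov-type argument with $D_\psi(w,w_i)$ (for an arbitrary fixed $w\in\mathcal{W}$) as the potential, in two stages: first show the iterates converge to \emph{some} point of $\mathcal{W}$, then identify that point as the Bregman projection of $w_0$ onto $\mathcal{W}$. The engine is a conservation identity. Using only the definition of the Bregman divergence, its linearity in the potential, the update $\nabla\psi(w_i)=\nabla\psi(w_{i-1})-\eta\nabla L_i(w_{i-1})$, and the fact that $L_i(w)=0$ for $w\in\mathcal{W}$, I would derive, for every $w\in\mathcal{W}$,
\[
D_\psi(w,w_{i-1}) = D_\psi(w,w_i) + \eta L_i(w_i) + \eta D_{L_i}(w,w_{i-1}) + D_{\psi-\eta L_i}(w_i,w_{i-1}),
\]
where $D_{L_i}$ and $D_{\psi-\eta L_i}$ are Bregman divergences with respect to $L_i$ and $\psi-\eta L_i$. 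All three extra terms on the right are nonnegative: $L_i(w_i)\ge 0$ because $\ell\ge0$; $D_{L_i}(w,w_{i-1})\ge 0$ because $L_i$ is a convex function composed with an affine map; and $D_{\psi-\eta L_i}(w_i,w_{i-1})\ge 0$ \emph{precisely} by the step-size hypothesis that $\psi(\cdot)-\eta L_i(\cdot)$ is convex. Hence $D_\psi(w,w_i)$ is non-increasing in $i$ for every $w\in\mathcal{W}$.

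Telescoping the identity over $i$ yields $\sum_{i\ge1}\big(\eta L_i(w_i)+\eta D_{L_i}(w,w_{i-1})+D_{\psi-\eta L_i}(w_i,w_{i-1})\big)\le D_\psi(w,w_0)<\infty$, so in particular $L_i(w_i)\to 0$ and $D_{L_i}(w,w_{i-1})\to 0$ (for the square loss the latter is $\tfrac12(y_i-x_i^Tw_{i-1})^2\to0$). Since $\{w_i\}$ remains in a sublevel set of $D_\psi(w,\cdot)$, it is bounded (under the mild coercivity/regularity conditions on $\psi$), so some subsequence converges, $w_{i_k}\to w_\infty$; because the data indices cycle (or recur infinitely often), the fact that $D_{L_i}(w,w_{i-1})\to0$ forces $x_j^Tw_\infty=y_j$ for all $j$, i.e.\ $w_\infty\in\mathcal{W}$. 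Now apply the monotonicity with the choice $w=w_\infty$: $D_\psi(w_\infty,w_i)$ is non-increasing and has $D_\psi(w_\infty,w_\infty)=0$ as a subsequential limit, hence $D_\psi(w_\infty,w_i)\to0$, and by strict convexity of $\psi$ this gives $w_i\to w_\infty$. So the whole sequence converges to a single point $w_\infty\in\mathcal{W}$.

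To identify the limit, unroll the update: $\nabla\psi(w_i)-\nabla\psi(w_0)=-\eta\sum_{j\le i}\nabla L_j(w_{j-1})\in\operatorname{span}\{x_1,\dots,x_n\}$ for every $i$, and since that subspace is closed, $\nabla\psi(w_\infty)-\nabla\psi(w_0)\in\operatorname{span}\{x_1,\dots,x_n\}$. Combined with $w_\infty\in\mathcal{W}$, this is exactly the KKT system for the convex program $\min_w D_\psi(w,w_0)$ subject to $x_i^Tw=y_i$, $i=1,\dots,n$: Lagrangian stationarity reads $\nabla_w D_\psi(w,w_0)=\nabla\psi(w)-\nabla\psi(w_0)=\sum_i\lambda_i x_i$, and the constraint set is affine, so these conditions are necessary and sufficient and, by strict convexity of $D_\psi(\cdot,w_0)$, determine the unique minimizer. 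Hence $w_\infty=\argmin_{w\in\mathcal{W}}D_\psi(w,w_0)$.

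The main obstacle I anticipate is the middle stage: promoting the summability/loss-decrease bounds into genuine convergence of the iterates to a point \emph{in} $\mathcal{W}$. This requires boundedness of the trajectory together with the subsequence-plus-monotonicity trick, plus a little bookkeeping with the cyclic (or random) data ordering to guarantee that \emph{all} residuals, not merely the currently sampled one, are driven to zero. By contrast, the conservation identity and the KKT matching are essentially routine once the convexity/step-size hypothesis is in hand.
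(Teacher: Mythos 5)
Your proposal is correct, and its first half is exactly the paper's machinery: the conservation identity you derive is the paper's ``fundamental identity'' (Lemma~\ref{lem:equality_SMD}), and your observation that in the linear case $D_{L_i}(w,w_{i-1})\geq 0$ holds \emph{globally} (because $L_i$ is a convex function of $w$) is precisely why no analogue of Assumption~\ref{assump1} is needed here; the telescoping, summability, and subsequence-plus-monotonicity steps mirror the proof of Theorem~\ref{thm:convergence}. Where you diverge is in identifying the limit: you close the argument with the KKT system for $\min_{w\in\mathcal{W}}D_\psi(w,w_0)$, using that $\nabla\psi(w_\infty)-\nabla\psi(w_0)$ lies in $\operatorname{span}\{x_1,\dots,x_n\}$ (the route of Gunasekar et al.). The identity itself gives a more direct identification in the linear case: for $w\in\mathcal{W}$ one has $D_{L_i}(w,w_{i-1})=-\ell(y_i-x_i^Tw_{i-1})+\ell'(y_i-x_i^Tw_{i-1})(y_i-x_i^Tw_{i-1})$, which does not depend on $w$ at all, so summing the identity to infinity yields $D_\psi(w,w_0)=D_\psi(w,w_\infty)+C$ with $C$ constant over $\mathcal{W}$; minimizing both sides over $w\in\mathcal{W}$ then gives $w_\infty=\argmin_{w\in\mathcal{W}}D_\psi(w,w_0)$ immediately, since $D_\psi(\cdot,w_\infty)$ is minimized on $\mathcal{W}$ at $w_\infty$ itself. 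This $w$-independence is exactly the structure that Theorem~\ref{thm:implicit_reg} shows survives only approximately (to $o(\epsilon)$) in the nonlinear case, so the identity-based identification is the one that generalizes; your KKT argument is equally valid for the linear proposition but is tied to the affine constraint structure. The only loose ends in your write-up are minor and shared with the paper: boundedness of the trajectory requires a mild coercivity condition on $\psi$ (which you flag), and driving \emph{all} residuals to zero needs the recurrence of every data index combined with $w_i-w_{i-1}\to 0$, which your bookkeeping remark correctly anticipates.
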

Note that the step size condition, i.e., the convexity of $\psi(\cdot)-\eta L_i(\cdot)$, depends on both the loss and the potential function. For the case of SGD, $\psi(w)=\frac{1}{2}\|w\|^2$, and the condition reduces to $\eta\leq\frac{1}{\|x_i\|^2}$. In that case, $D_{\psi}(w,w_0)$ is simply $\frac{1}{2}\|w-w_0\|^2$.

\begin{corollary}
In particular, for the initialization $w_0=\argmin_{w\in\mathbb{R}^p} \psi(w)$, under the conditions of Proposition~\ref{prop:convergence_SMD}, the SMD iterates converge to
\begin{equation}\label{eq:minimum_potential}
w_{\infty}=\argmin_{w\in\mathcal{W}} \psi(w) .
\end{equation}
\end{corollary}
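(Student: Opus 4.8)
The plan is to obtain the corollary as an immediate specialization of Proposition~\ref{prop:convergence_SMD}. That proposition already tells us that, under the stated step-size condition and for \emph{any} initialization $w_0$, the SMD iterates converge to $w_\infty = \argmin_{w\in\mathcal{W}} D_{\psi}(w,w_0)$. So all that remains is to show that, for the particular choice $w_0 = \argmin_{w\in\mathbb{R}^p}\psi(w)$, minimizing $D_{\psi}(\cdot,w_0)$ over $\mathcal{W}$ is the same as minimizing $\psi(\cdot)$ over $\mathcal{W}$.

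The key step is the first-order optimality condition: since $\psi$ is differentiable and $w_0$ is an unconstrained global minimizer of $\psi$ over $\mathbb{R}^p$, we have $\nabla\psi(w_0)=0$. Substituting this into the definition of the Bregman divergence gives
\[
D_{\psi}(w,w_0) = \psi(w) - \psi(w_0) - \nabla\psi(w_0)^T(w-w_0) = \psi(w) - \psi(w_0).
\]
Because $\psi(w_0)$ does not depend on $w$, it follows that $\argmin_{w\in\mathcal{W}} D_{\psi}(w,w_0) = \argmin_{w\in\mathcal{W}}\psi(w)$, and combining this with Proposition~\ref{prop:convergence_SMD} yields exactly \eqref{eq:minimum_potential}.

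The only place that calls for a bit of care is the claim $\nabla\psi(w_0)=0$, which presumes the minimum of $\psi$ is attained in the interior of its effective domain; this holds for the standard potentials of interest (e.g.\ $\psi(w)=\tfrac12\|w\|_q^2$, minimized at the origin). For potentials whose natural domain is a proper subset of $\mathbb{R}^p$ (such as the negative entropy, whose minimizer sits on the boundary of the positive orthant) one would instead argue directly that $D_{\psi}(w,w_0)$ and $\psi(w)$ agree up to an additive term that is constant in $w$ along $\mathcal{W}$. Beyond this minor bookkeeping, I do not expect any real obstacle: the corollary is essentially a one-line consequence of the proposition.
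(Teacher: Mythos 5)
Your argument is correct and matches the paper's own reasoning: the proof of the analogous nonlinear result (Corollary~\ref{cor:implicit_reg} in the appendix) uses exactly the same step, namely that $w_0=\argmin_w\psi(w)$ forces $\nabla\psi(w_0)=0$, so $D_\psi(w,w_0)=\psi(w)-\psi(w_0)$ and the two argmins over $\mathcal{W}$ coincide. Your extra caveat about minimizers on the boundary of the potential's domain is a reasonable refinement the paper does not bother with, but it does not change the substance.
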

This means that running SMD for a linear model with the aforementioned $w_0$, without any explicit regularization, results in a solution that has the smallest potential $\psi(\cdot)$ among all solutions, i.e., SMD implicitly regularizes the solution with $\psi(\cdot)$. For example, SGD initialized around zero acts an an $\ell_2$ regularizer. We will show that these results continue to hold for highly overparameterized nonlinear models in an approximate sense.

\section{Theoretical Results}\label{sec:theory}
In this section, we provide our main theoretical results. In particular, we show that for highly overparameterized nonlinear models, if initialized close enough to the set $\mathcal{W}$, (1) SMD converges to a global minimum, (2) the global minimum obtained by SMD is approximately the closest one to the initialization in Bregman divergence corresponding to the potential.

\begin{figure}[t]
\begin{center}
\includegraphics[width=0.7\columnwidth]{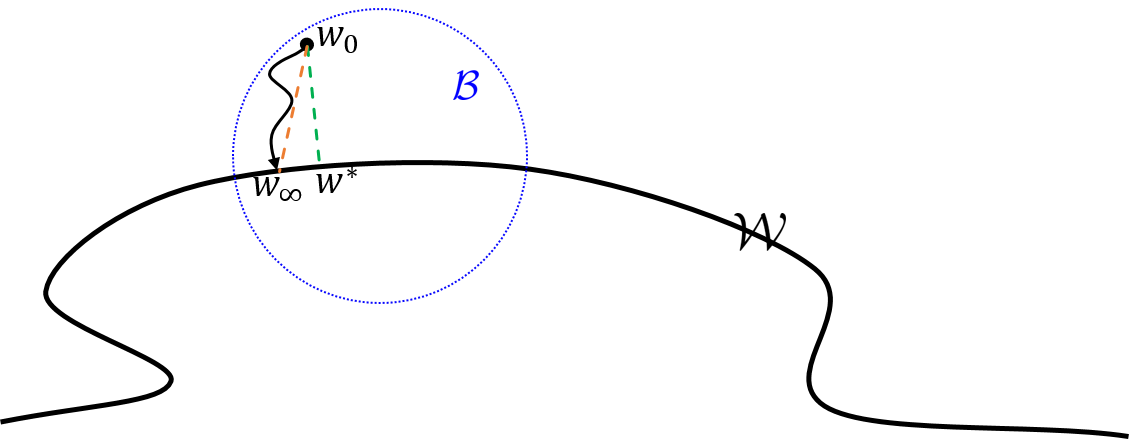}
\end{center}
\caption{An illustration of the parameter space. $\mathcal{W}$ represents the set of global minima, $w_0$ is the initialization, $\mathcal{B}$ is the local neighborhood, $w^*$ and the closest global minimum to $w_0$ (in Bregman divergence), and $w_{\infty}$ is the minimum that SMD converges to.}\label{fig:local}
\end{figure}

\subsection{Convergence of Stochastic Mirror Descent}
Let us define
\begin{equation}\label{eq:D_Li}
D_{L_i}(w,w'):=L_i(w)-L_i(w')-\nabla L_i(w')^T(w-w'),
\end{equation}
which is defined in a similar way to a Bregman divergence for the loss function. The difference though is that, unlike the potential function of the Bregman divergence, the loss function $L_i(\cdot) = \ell(y_i-f(x_i,\cdot))$ need not be convex (even when $\ell(\cdot)$ is) due to the nonlinearity of $f(\cdot,\cdot)$. As a result, $D_{L_i}(w,w')$ is not necessarily non-negative.

It has been argued in several recent papers that in highly overparameterized neural networks, any random initialization $w_0$ is close to $\mathcal{W}$, with high probability \cite{li2018learning,du2018gradient,azizan2019stochastic,allen2019convergence} (see also the discussion in Section~\ref{sec:highly_over} of the supplementary material). Therefore, it is reasonable to make the following assumption about the initialization.
\begin{assumption}\label{assump1}
Denote the initial point by $w_0$. There exists $w\in\mathcal{W}$ and a region $\mathcal{B}=\{w'\in\mathbb{R}^p\ |\ D_{\psi}(w,w')\leq\epsilon\}$ containing $w_0$, such that $D_{L_i}(w,w')\geq 0, i=1,\dots,n$, for all $w'\in\mathcal{B}$.
\end{assumption}
This assumption states that, while $L_i$ certainly need not be convex, since $w$ is a minimizer of $L_i(\cdot)$, the initial point is close to $\mathcal{W}$ so that $D_{L_i}(w,w_0)\geq 0$ (see Fig.~\ref{fig:local_Li} for an illustration).

Our second assumption states that in this local region, the first and second derivatives of the model are bounded.  
\begin{assumption}\label{assump2}
Consider the region $\mathcal{B}$ in Assumption~\ref{assump1}. $f_i(\cdot)$ have bounded gradient and Hessian on the convex hull of $\mathcal{B}$, i.e., $\|\nabla f_i(w')\|\leq \gamma$, and
$\alpha\leq\lambda_{\min}(H_{f_i}(w'))\leq\lambda_{\max}(H_{f_i}(w'))\leq\beta, i=1,\dots,n$, for all $w'\in\mathrm{conv}\ \mathcal{B}$.
\end{assumption}
This is again a mild assumption, which is assumed in other related works such as \cite{oymak2019overparameterized} as well. The following theorem states that under Assumption~\ref{assump1}, SMD converges to a global minimum.
\begin{theorem}\label{thm:convergence}
Consider the set of interpolating parameters $\mathcal{W}=\{w\in\mathbb{R}^p\ |\ f(x_i,w)=y_i, i=1,\dots,n\}$, and the SMD iterates given in \eqref{eq:SMD}, where every data point is revisited after some steps. Under Assumption~\ref{assump1}, for sufficiently small step size, i.e., for any $\eta>0$ for which $\psi(\cdot)-\eta L_i(\cdot)$ is strictly convex for all i, the following holds.
\begin{enumerate}
    \item All the iterates $\{w_i\}$ remain in $\mathcal{B}$.
    \item The iterates converge (to $w_{\infty}$).
    \item $w_{\infty}\in\mathcal{W}$.
\end{enumerate}
\end{theorem}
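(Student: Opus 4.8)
The heart of the argument is a \emph{fundamental identity} satisfied by the SMD recursion \eqref{eq:SMD}: for every $w\in\mathcal{W}$ and every $i\ge 1$,
\begin{equation}
D_{\psi}(w,w_{i-1}) \;=\; D_{\psi}(w,w_i) + D_{\psi-\eta L_i}(w_i,w_{i-1}) + \eta\, L_i(w_i) + \eta\, D_{L_i}(w,w_{i-1}).
\end{equation}
I would prove this first, by expanding every Bregman term from its definition, substituting $\nabla\psi(w_i)=\nabla\psi(w_{i-1})-\eta\nabla L_i(w_{i-1})$, using $L_i(w)=0$ since $w\in\mathcal{W}$, and collecting terms; linearity of the Bregman divergence in its potential produces the combined term $D_{\psi-\eta L_i}$. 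The only nonlinearity-specific feature is that $D_{L_i}$ in \eqref{eq:D_Li} need not be nonnegative, which is exactly what Assumption~\ref{assump1} controls on $\mathcal{B}$.

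For Part~1, fix the interpolating point $w$ and the region $\mathcal{B}$ from Assumption~\ref{assump1} and induct on $i$. The base case is $w_0\in\mathcal{B}$. If $w_{i-1}\in\mathcal{B}$, then in the identity $D_{\psi-\eta L_i}(w_i,w_{i-1})\ge 0$ by the step-size hypothesis (strict convexity of $\psi-\eta L_i$), $\eta L_i(w_i)\ge 0$ by nonnegativity of the loss, and $\eta D_{L_i}(w,w_{i-1})\ge 0$ by Assumption~\ref{assump1} since $w_{i-1}\in\mathcal{B}$. Hence $D_{\psi}(w,w_i)\le D_{\psi}(w,w_{i-1})\le\epsilon$, so $w_i\in\mathcal{B}$. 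Thus $\{D_{\psi}(w,w_i)\}_i$ is nonincreasing and bounded below, hence convergent, and summing the identity telescopes to
\begin{equation}
\sum_{i\ge 1}\Big(D_{\psi-\eta L_i}(w_i,w_{i-1}) + \eta L_i(w_i) + \eta D_{L_i}(w,w_{i-1})\Big) \;=\; D_{\psi}(w,w_0) - \lim_{i\to\infty}D_{\psi}(w,w_i) \;<\; \infty .
\end{equation}
Every summand being nonnegative, each tends to $0$; in particular $L_i(w_i)\to 0$, and $D_{\psi-\eta L_i}(w_i,w_{i-1})\to 0$ yields $\|w_i-w_{i-1}\|\to 0$ once the strict convexity of the finitely many $\psi-\eta L_i$ is upgraded to a uniform strong-convexity modulus on the compact set $\mathrm{conv}\,\mathcal{B}$.

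For Parts~2 and 3, I would first identify the limit set: since $\mathcal{B}$ is bounded it contains a cluster point $w_\infty$ of $\{w_i\}$, and using $\|w_i-w_{i-1}\|\to 0$ together with the hypothesis that each data index recurs within a bounded number of steps, every window around an approaching subsequence contains, for each $j$, an update on datum $j$ along which $L_j(\cdot)\to 0$ and the iterate $\to w_\infty$; continuity of $L_j$ then gives $L_j(w_\infty)=0$ for all $j$, i.e.\ $w_\infty\in\mathcal{W}$, which is Part~3. To upgrade this to convergence of the whole sequence (Part~2), I would rerun the Part~1 monotonicity with $w$ replaced by $w_\infty$: the identity still holds at $w_\infty\in\mathcal{W}$, and writing $D_{L_i}(w_\infty,w_{i-1})=D_{L_i}(w,w_{i-1})-\nabla L_i(w_{i-1})^{T}(w_\infty-w)$, one can show (using $L_i(w_{i-1})\to 0$, hence $\nabla L_i(w_{i-1})\to 0$ on the compact $\mathcal{B}$, together with the summability above) that this term has summable negative part, so $D_{\psi}(w_\infty,w_i)$ converges; a subsequence of it tends to $D_{\psi}(w_\infty,w_\infty)=0$, hence $D_{\psi}(w_\infty,w_i)\to 0$, and strong convexity of $\psi$ on $\mathrm{conv}\,\mathcal{B}$ forces $w_i\to w_\infty$.

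The main obstacle is exactly this last step. The fundamental identity makes Part~1 and the summability essentially mechanical, but the monotone Lyapunov quantity $D_{\psi}(w,w_i)$ is anchored at the \emph{a priori} interpolating point of Assumption~\ref{assump1}, not at the realized limit $w_\infty$, and $D_{\psi}(w,\cdot)$ is not convex in its second argument, so monotonicity alone does not localize the limit. Making the transfer to $w_\infty$ rigorous --- controlling the sign, or the summable negative part, of $D_{L_i}(w_\infty,w_{i-1})$ along the trajectory, which is where smallness of $\mathcal{B}$ and regularity of the $f_i$ (cf.\ Assumption~\ref{assump2}) enter --- is the delicate point; alternatively one can invoke connectedness of the set of cluster points together with $\|w_i-w_{i-1}\|\to 0$. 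One also needs $\mathcal{B}$ to be bounded (a sublevel set of the strictly convex, coercive $\psi$) for all the compactness arguments to go through.
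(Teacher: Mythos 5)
Your proposal follows the paper's argument essentially step for step: the same fundamental identity, the same induction using nonnegativity of $D_{\psi-\eta L_i}(w_i,w_{i-1})$, $\eta L_i(w_i)$, and $\eta D_{L_i}(w,w_{i-1})$ to keep the iterates in $\mathcal{B}$, and the same telescoping/summability argument to force each summand to zero. The one place you genuinely diverge is Part 2, and there you are more careful than the paper itself: the paper infers convergence of the whole sequence directly from $D_{\psi-\eta L_i}(w_i,w_{i-1})\to 0$, i.e., from $\|w_i-w_{i-1}\|\to 0$, which on its own does not make $\{w_i\}$ Cauchy. Your repair --- extract a cluster point $w_\infty$ from the bounded set $\mathcal{B}$, show $w_\infty\in\mathcal{W}$ using $L_i(w_i)\to 0$, the recurrence of each datum, $\|w_i-w_{i-1}\|\to 0$, and continuity, and then re-anchor the Lyapunov quantity at $w_\infty$ to obtain $D_\psi(w_\infty,w_i)\to 0$ --- is the right strategy, and you correctly identify the delicate point: Assumption~\ref{assump1} guarantees $D_{L_i}(w,\cdot)\ge 0$ on $\mathcal{B}$ only for the particular $w$ it postulates, not for an arbitrary element of $\mathcal{W}$ such as $w_\infty$, so transferring the monotonicity to $w_\infty$ requires the extra control (summable negative part of $D_{L_i}(w_\infty,w_{i-1})$, or connectedness of the cluster set) that you sketch; the paper's proof does not address this and simply asserts convergence. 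In short, your write-up is a faithful reconstruction of the paper's proof with a more honest treatment of the one step the paper glosses over.
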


\begin{figure}[tb]
\begin{center}
\includegraphics[width=0.4\columnwidth]{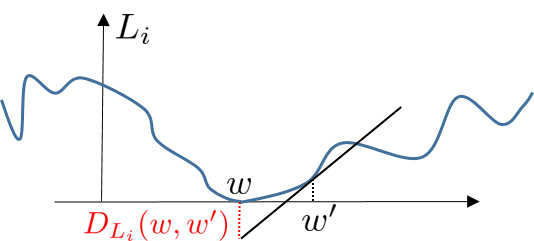}
\end{center}
\caption{An illustration of $D_{L_i}(w,w')\geq 0$ in a local region in Assumption~\ref{assump1}.}\label{fig:local_Li}
\end{figure}

Note that, while convergence (to some point) with decaying step size is almost trivial, this result establishes converges to the solution set with \emph{fixed} step size. Furthermore, the convergence is \emph{deterministic}, and is not in expectation or with high probability. For example, this result also applies to the case where we cycle through the data deterministically.

We should also remark that the choice of distance in the definition of the ``ball'' $\mathcal{B}$ was important to be the Bregman divergence with respect to $\psi$ and in that particular order. In fact, one cannot guarantee that SMD gets closer to (i.e. does not get farther from) $w$ at every step in the usual Euclidean sense. At some steps, it may get farther from $w$ in other senses, while getting closer in $D_{\psi}(w,\cdot)$.

Denote the global minimum that is closest to the initialization in Bregman divergence by $w^*$, i.e.,
\begin{equation}
    w^*=\argmin_{w\in\mathcal{W}} D_{\psi}(w,w_0).
\end{equation}
Recall that in the linear case, this was what SMD converges to. We show that in the nonlinear case, under Assumptions~\ref{assump1} and \ref{assump2}, SMD converges to a point $w_{\infty}$ which is ``very close'' to $w^*$. 


\begin{theorem}\label{thm:implicit_reg}
Define $w^*=\argmin_{w\in\mathcal{W}} D_{\psi}(w,w_0)$. Under the assumptions of Theorem~\ref{thm:convergence}, and Assumption~\ref{assump2}, the following holds.
\begin{enumerate}
    \item $D_{\psi}(w_{\infty},w_0)=D_{\psi}(w^*,w_0)+o(\epsilon)$
    \item $D_{\psi}(w^*,w_{\infty})=o(\epsilon)$
\end{enumerate}
\end{theorem}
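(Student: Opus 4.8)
The plan is to exploit the fundamental identity for SMD (Lemma~\ref{lem:equality_SMD}), which for every $w$ states
\[
D_{\psi}(w,w_{i-1}) = D_{\psi}(w,w_i) + D_{\psi-\eta L_i}(w_i,w_{i-1}) + \eta L_i(w_i) + \eta D_{L_i}(w,w_{i-1}).
\]
Summing over $i\ge 1$ and using Theorem~\ref{thm:convergence} (all iterates stay in $\mathcal{B}$ and converge to $w_{\infty}\in\mathcal{W}$) gives, for every $w$,
\[
D_{\psi}(w,w_0) = D_{\psi}(w,w_{\infty}) + S + R(w), \qquad S := \sum_{i\ge 1}\big(D_{\psi-\eta L_i}(w_i,w_{i-1}) + \eta L_i(w_i)\big), \quad R(w) := \eta\sum_{i\ge 1} D_{L_i}(w,w_{i-1}),
\]
with $S\ge 0$ independent of $w$. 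First I would specialize to the point $\bar{w}\in\mathcal{W}$ supplied by Assumption~\ref{assump1}: then $D_{L_i}(\bar{w},w_{i-1})\ge 0$ for all $i$, so every summand above is nonnegative, whence $S\le\epsilon$, $R(\bar{w})\le\epsilon$ and $D_{\psi}(\bar{w},w_{\infty})\le\epsilon$. Together with $D_{\psi}(w^*,w_0)\le D_{\psi}(\bar{w},w_0)\le\epsilon$ and local strong convexity of $\psi$, this forces $\|w_{\infty}-w^*\|$, $\|w_{i-1}-w^*\|$, $\|w_{i-1}-\bar w\|$ and $\|w_{\infty}-\bar w\|$ to all be $O(\sqrt{\epsilon})$, so the second-order Taylor expansions used below run along segments inside $\mathrm{conv}\,\mathcal{B}$ (enlarging $\mathcal{B}$ by a universal constant if needed, which only changes constants).

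Next I would subtract the summed identity at $w=w^*$ from the one at $w=w_{\infty}$. Since $D_{\psi}(w_{\infty},w_{\infty})=0$, this gives
\[
D_{\psi}(w_{\infty},w_0) - D_{\psi}(w^*,w_0) = \big(R(w_{\infty}) - R(w^*)\big) - D_{\psi}(w^*,w_{\infty}),
\]
and the left-hand side is $\ge 0$ because $w^* = \argmin_{w\in\mathcal{W}} D_{\psi}(w,w_0)$ while $w_{\infty}\in\mathcal{W}$. Hence the whole theorem reduces to proving $R(w_{\infty}) - R(w^*) = o(\epsilon)$: this immediately yields $0\le D_{\psi}(w_{\infty},w_0) - D_{\psi}(w^*,w_0)\le R(w_{\infty}) - R(w^*) = o(\epsilon)$ (claim 1) and $0\le D_{\psi}(w^*,w_{\infty})\le R(w_{\infty}) - R(w^*) = o(\epsilon)$ (claim 2).

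To evaluate $R(w_{\infty}) - R(w^*)$, I would use $L_i(w_{\infty})=L_i(w^*)=0$ to get $D_{L_i}(w_{\infty},w_{i-1}) - D_{L_i}(w^*,w_{i-1}) = -\nabla L_i(w_{i-1})^T(w_{\infty}-w^*)$, and then telescope the mirror updates $\nabla\psi(w_i) - \nabla\psi(w_{i-1}) = -\eta\nabla L_i(w_{i-1})$:
\[
R(w_{\infty}) - R(w^*) = -\eta\sum_{i\ge 1}\nabla L_i(w_{i-1})^T(w_{\infty}-w^*) = \big(\nabla\psi(w_{\infty}) - \nabla\psi(w_0)\big)^T(w_{\infty}-w^*).
\]
Writing $\nabla\psi(w_{\infty}) - \nabla\psi(w_0) = \eta\sum_i \ell'(y_i - f_i(w_{i-1}))\,\nabla f_i(w_{i-1})$ and using $f_i(w_{\infty})=f_i(w^*)=y_i$ together with the Hessian bound $\beta$ of Assumption~\ref{assump2}, a second-order expansion of $f_i$ gives, for every $i$,
\[
|\nabla f_i(w_{i-1})^T(w_{\infty}-w^*)| \le \tfrac{\beta}{2}\|w_{\infty}-w^*\|^2 + \beta\,\|w_{i-1}-w^*\|\,\|w_{\infty}-w^*\| = O(\epsilon).
\]
Consequently $|R(w_{\infty}) - R(w^*)| \le C\,\epsilon\cdot\eta\sum_{i\ge 1}|\ell'(y_i - f_i(w_{i-1}))|$ for a constant $C$ depending on $\gamma,\beta$ and the moduli of $\psi$.

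The last step is the main obstacle: bounding $\eta\sum_{i\ge 1}|\ell'(y_i - f_i(w_{i-1}))|$ --- which equals $\sqrt{2}\,\eta\sum_i\sqrt{L_i(w_{i-1})}$ for the square loss --- by $O(\sqrt{\epsilon})$, so that $R(w_{\infty}) - R(w^*) = O(\epsilon^{3/2}) = o(\epsilon)$. This does \emph{not} follow from $S\le\epsilon$ alone, which controls only $\sum_i L_i(w_i)$; taking square roots makes the series genuinely delicate, so one has to show the per-step losses $L_i(w_{i-1})$ decay geometrically along the trajectory. The route I would take is to track the nonincreasing quantity $D_{\psi}(\bar{w},w_k)$ from the partial-sum form of the identity: its decrease at step $i$ is at least $\eta L_i(w_i)$, and over a full sweep through the $n$ data points this decrease is comparable to $D_{\psi}(\bar{w},\cdot)$ itself --- a restricted-strong-convexity (PL-type) inequality that holds near $\mathcal{W}$ thanks to the bounded Hessians of Assumption~\ref{assump2} in the overparameterized regime. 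This yields $D_{\psi}(\bar{w},w_{kn})\le\epsilon\,\rho^{k}$ for some $\rho<1$, hence $L_i(w_{i-1}) = O(\epsilon\,\rho^{\lfloor i/n\rfloor})$ and $\eta\sum_i\sqrt{L_i(w_{i-1})} = O(\sqrt{\epsilon})$, completing the proof. Everything else is the fundamental identity together with routine Taylor estimates under Assumptions~\ref{assump1}--\ref{assump2}.
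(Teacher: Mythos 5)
Your argument follows the paper's proof almost step for step: sum the fundamental identity of Lemma~\ref{lem:equality_SMD} over all iterations, evaluate the resulting expression at $w=w_\infty$ and at $w=w^*$, subtract, and observe that the sum of the two nonnegative quantities $D_\psi(w_\infty,w_0)-D_\psi(w^*,w_0)$ and $D_\psi(w^*,w_\infty)$ equals the difference of the $w$-dependent remainders, which is then estimated by a second-order Taylor expansion of $f_i$ using the Hessian bounds of Assumption~\ref{assump2}. Your telescoping identity $R(w_\infty)-R(w^*)=\bigl(\nabla\psi(w_\infty)-\nabla\psi(w_0)\bigr)^T(w_\infty-w^*)$ is a clean reformulation that the paper does not use, but it leads to essentially the same bound $|R(w_\infty)-R(w^*)|\le C\,\epsilon\cdot\eta\sum_{i}|\ell'(y_i-f_i(w_{i-1}))|$ that the paper reaches by subtracting the two quadratic-form expressions directly.

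The one place you depart from the paper is the last step, and that is where your proposal has a genuine gap. You correctly identify that everything hinges on showing $\eta\sum_i|\ell'(y_i-f_i(w_{i-1}))|=O(\sqrt\epsilon)$ (or at least $o(1)$), and that this does not follow from the summability of $\eta\sum_i L_i(w_i)$ supplied by the identity. But the fix you propose --- geometric decay of $D_\psi(\bar w,w_{kn})$ over sweeps, via a ``restricted-strong-convexity (PL-type) inequality that holds near $\mathcal{W}$'' --- is not a consequence of the stated assumptions. Assumption~\ref{assump2} bounds $\|\nabla f_i\|$ from above and the eigenvalues of $H_{f_i}$ from both sides; it gives no lower bound on $\|\nabla f_i(w')\|$ or on the singular values of the Jacobian of $w\mapsto(f_1(w),\dots,f_n(w))$, which is what a PL inequality relating the per-sweep loss decrease to $D_\psi(\bar w,\cdot)$ would require. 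Without such an additional hypothesis the rate $\rho<1$ cannot be extracted, and the series $\sum_i\sqrt{L_i(w_{i-1})}$ is not controlled. To be fair, the paper's own proof is weakest at exactly this point: it asserts that $\ell'(y_i-f_i(w_{i-1}))=o(\epsilon)$ (when its own Taylor estimate only yields $O(\sqrt\epsilon)$ for a loss with $\ell'(0)=0$ and Lipschitz $\ell'$) and that ``the sum is bounded'' without establishing convergence of $\sum_i|\ell'(y_i-f_i(w_{i-1}))|$. So your diagnosis of the obstacle is sharper than the paper's treatment of it, but neither your sketch nor the paper's argument closes this step from Assumptions~\ref{assump1}--\ref{assump2} alone.
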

In other words, if we start with an initialization that is $O(\epsilon)$ away from $\mathcal{W}$ (in Bregman divergence), we converge to a point $w_{\infty}$ that is $o(\epsilon)$ away from the $w^*$, the closest point on $\mathcal{W}$.

\begin{corollary}\label{cor:implicit_reg}
For the initialization $w_0=\argmin_{w\in\mathbb{R}^p} \psi(w)$, under the conditions of Theorem~\ref{thm:implicit_reg}, $w^*=\argmin_{w\in\mathcal{W}}\psi(w)$ and the following holds.
\begin{enumerate}
    \item $\psi(w_{\infty})=\psi(w^*)+o(\epsilon)$
    \item $D_{\psi}(w^*,w_{\infty})=o(\epsilon)$
\end{enumerate}
\end{corollary}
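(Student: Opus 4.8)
The plan is to obtain Corollary~\ref{cor:implicit_reg} as an immediate specialization of Theorem~\ref{thm:implicit_reg} once we exploit the defining property of the initialization $w_0 = \argmin_{w\in\mathbb{R}^p}\psi(w)$. The first step is to note that, since $\psi$ is strictly convex and differentiable and $w_0$ is its unconstrained global minimizer over $\mathbb{R}^p$, the first-order optimality condition yields $\nabla\psi(w_0) = 0$. Substituting this into the definition of the Bregman divergence gives, for every $w\in\mathbb{R}^p$,
\begin{equation*}
D_\psi(w, w_0) = \psi(w) - \psi(w_0) - \nabla\psi(w_0)^T(w - w_0) = \psi(w) - \psi(w_0).
\end{equation*}

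The second step is to use this identity to identify $w^*$. Because $\psi(w_0)$ is a constant independent of $w$, minimizing $D_\psi(w, w_0)$ over $w\in\mathcal{W}$ is equivalent to minimizing $\psi(w)$ over $w\in\mathcal{W}$; hence $w^* = \argmin_{w\in\mathcal{W}} D_\psi(w, w_0) = \argmin_{w\in\mathcal{W}}\psi(w)$, as claimed in the statement. The third step is to rewrite the two conclusions of Theorem~\ref{thm:implicit_reg} under the same substitution. Part~1 of that theorem states $D_\psi(w_\infty, w_0) = D_\psi(w^*, w_0) + o(\epsilon)$; replacing each Bregman term by $\psi(\cdot) - \psi(w_0)$ and cancelling the constant $\psi(w_0)$ gives $\psi(w_\infty) = \psi(w^*) + o(\epsilon)$, which is conclusion~1 of the corollary. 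Part~2 of the theorem, $D_\psi(w^*, w_\infty) = o(\epsilon)$, is conclusion~2 verbatim, so nothing further is needed there.

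There is essentially no hard step: the corollary is asserted ``under the conditions of Theorem~\ref{thm:implicit_reg}'', so Assumptions~\ref{assump1} and \ref{assump2} --- in particular the requirement that this specific $w_0$ lie in a neighborhood $\mathcal{B}$ of some $w\in\mathcal{W}$ with $D_{L_i}(w, w_0)\ge 0$ --- are taken as given, and the only real content is the algebraic observation $\nabla\psi(w_0) = 0$ together with the resulting collapse of $D_\psi(\cdot, w_0)$ to $\psi(\cdot)$ up to an additive constant. The one point that would deserve a remark rather than a proof, if one wished to argue that the hypothesis is non-vacuous, is why the minimum-potential initialization is itself close to $\mathcal{W}$ in the highly overparameterized regime --- precisely the ``blessing of dimensionality'' discussion the paper refers to --- but that is a modeling comment external to the derivation.
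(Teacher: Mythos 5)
Your proposal is correct and follows essentially the same route as the paper's own proof: use $\nabla\psi(w_0)=0$ to collapse $D_\psi(\cdot,w_0)$ to $\psi(\cdot)-\psi(w_0)$, then substitute into the two conclusions of Theorem~\ref{thm:implicit_reg}. You are in fact slightly more complete than the paper in that you explicitly justify the identification $w^*=\argmin_{w\in\mathcal{W}}\psi(w)$, which the paper asserts without spelling out.
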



\subsection{Proof Technique: Fundamental Identity of SMD}
The main tool used for the proofs is a fundamental identity that holds for SMD in a very general setting.
\begin{lemma}\label{lem:equality_SMD}
For any model $f(\cdot,\cdot)$, any differentiable loss $\ell(\cdot)$, any parameter $w\in\mathcal{W}$, and any step size $\eta>0$, the following relation holds for the SMD iterates $\{w_i\}$
\begin{equation}\label{eq:equality_SMD}
D_{\psi}(w,w_{i-1}) = D_{\psi}(w,w_i) +D_{\psi-\eta L_i}(w_i,w_{i-1})+\eta L_i(w_i) +\eta D_{L_i}(w,w_{i-1}) ,
\end{equation}
for all $i\geq 1$.
\end{lemma}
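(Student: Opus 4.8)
The plan is to prove \eqref{eq:equality_SMD} by directly expanding the three Bregman-type terms on the right-hand side using only their definitions, the definition of $D_\psi$, and the update rule \eqref{eq:SMD}. I want to stress at the outset that no convexity of $\psi-\eta L_i$ is used anywhere, so the identity is purely formal and in particular requires no step-size condition: every quantity appearing in \eqref{eq:equality_SMD} is well-defined as a first-order Taylor remainder regardless of convexity, which is precisely why we can afford to apply it later (in the proofs of Theorems~\ref{thm:convergence} and \ref{thm:implicit_reg}) with $D_{L_i}$ possibly negative.

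First I would record the needed expansions. From the definition of the Bregman divergence,
\begin{equation*}
D_\psi(w,w_{i-1}) - D_\psi(w,w_i) = \psi(w_i) - \psi(w_{i-1}) - \nabla\psi(w_{i-1})^T(w-w_{i-1}) + \nabla\psi(w_i)^T(w-w_i).
\end{equation*}
For the middle term, the key observation is that, by the update rule \eqref{eq:SMD}, the gradient of the perturbed potential $\psi-\eta L_i$ at $w_{i-1}$ is exactly the mirror image of $w_i$, namely $\nabla(\psi-\eta L_i)(w_{i-1}) = \nabla\psi(w_{i-1}) - \eta\nabla L_i(w_{i-1}) = \nabla\psi(w_i)$; hence
\begin{equation*}
D_{\psi-\eta L_i}(w_i,w_{i-1}) = \psi(w_i) - \psi(w_{i-1}) - \eta\big(L_i(w_i)-L_i(w_{i-1})\big) - \nabla\psi(w_i)^T(w_i-w_{i-1}).
\end{equation*}
Finally, straight from \eqref{eq:D_Li}, $\eta D_{L_i}(w,w_{i-1}) = \eta L_i(w) - \eta L_i(w_{i-1}) - \eta\nabla L_i(w_{i-1})^T(w-w_{i-1})$.

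Then I would simply subtract. Forming the difference between the left-hand side of \eqref{eq:equality_SMD} and the right-hand side, all evaluations of $\psi$ at $w_i,w_{i-1}$ and of $L_i$ at $w_i,w_{i-1}$ cancel in pairs (the $-\eta L_i(w_i)$ inside the middle term is killed by the standalone $\eta L_i(w_i)$; the $\eta L_i(w_{i-1})$ terms cancel between the middle term and $\eta D_{L_i}$). What remains is the linear part
\begin{equation*}
\nabla\psi(w_i)^T(w-w_i) + \nabla\psi(w_i)^T(w_i-w_{i-1}) - \nabla\psi(w_{i-1})^T(w-w_{i-1}) + \eta\nabla L_i(w_{i-1})^T(w-w_{i-1}) - \eta L_i(w),
\end{equation*}
and combining the first two summands into $\nabla\psi(w_i)^T(w-w_{i-1})$ this equals $\big(\nabla\psi(w_i)-\nabla\psi(w_{i-1})+\eta\nabla L_i(w_{i-1})\big)^T(w-w_{i-1}) - \eta L_i(w)$. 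By the update rule \eqref{eq:SMD} the coefficient multiplying $(w-w_{i-1})$ vanishes identically, leaving $-\eta L_i(w)$, and since $w\in\mathcal{W}$ we have $L_i(w)=0$. Hence the whole difference is zero, which is exactly \eqref{eq:equality_SMD}.

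There is no real obstacle here: this is essentially a one-line computation once the substitution $\nabla(\psi-\eta L_i)(w_{i-1})=\nabla\psi(w_i)$ is made, which is what makes the cross terms telescope. The only subtleties worth flagging in the write-up are (i) the careful bookkeeping of the four linear terms above, and (ii) the role of the hypothesis $w\in\mathcal{W}$, which enters solely at the very last step to eliminate $\eta L_i(w)$ — for an arbitrary $w\in\mathbb{R}^p$ the identical computation yields \eqref{eq:equality_SMD} with an extra $+\eta L_i(w)$ added to the right-hand side.
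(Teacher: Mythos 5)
Your proof is correct and is essentially the same direct computation as the paper's: both expand the Bregman-type terms from their definitions, substitute the update rule $\nabla\psi(w_i)=\nabla\psi(w_{i-1})-\eta\nabla L_i(w_{i-1})$ (your observation that $\nabla(\psi-\eta L_i)(w_{i-1})=\nabla\psi(w_i)$ is exactly the mechanism the paper uses when it combines $-D_\psi(w_i,w_{i-1})+\eta D_{L_i}(w_i,w_{i-1})$ into $-D_{\psi-\eta L_i}(w_i,w_{i-1})$), and invoke $L_i(w)=0$ for $w\in\mathcal{W}$ only at the last step. Your remarks that no convexity or step-size condition is needed and that a general $w$ picks up an extra $\eta L_i(w)$ are both accurate and consistent with how the paper uses the identity.
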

This identity allows one to prove the results in a remarkably simple and direct way. Due to space limitations, the proofs are relegated to the supplementary material.

The ideas behind this identity are related to $H_{\infty}$ estimation theory \cite{hassibi1999indefinite,simon2006optimal}, which was originally developed in the 1990's in the context of robust control theory. In fact, it has connections to the minimax optimality of SGD, which was shown by \cite{hassibi1994hoo} for linear models, and recently extended to nonlinear models and general mirrors by \cite{azizan2019stochastic}.

\section{Related Work}\label{sec:related_work}
There have been many efforts in the past few years to study deep learning from an optimization perspective, e.g., \cite{achille2017emergence,chaudhari2018stochastic,shwartz2017opening,allen2019convergence,oymak2019overparameterized,azizan2019stochastic,mababe2017,du2018gradient,li2018learning}. While it is not possible to review all the contributions here, we comment on the ones that are most closely related to our results. We highlight the distinctions between our results and those.

Many recent papers have studied the so-called ``overparameterized'' setting, or the ``interpolating'' regime, which is common in deep learning \cite{oymak2019overparameterized,allen2019convergence,soltanolkotabi2017theoretical,mababe2017}. All these results, similar to our work, have assumptions for being close to the solution space (global minima), which is perhaps reasonable in highly overparameterized models, as we also argued in Section~\ref{sec:highly_over} of the supplementary material. However, most of these results are limited to (S)GD and do not generalize to more general mirrors.

Furthermore, even for the case of SGD, our results are stronger than those in the literature, in the sense that not only do we show convergence to a global minimum, but we also show that $w_{\infty}$ and $w^*$ are close. In fact, \cite{oymak2019overparameterized} showed that for SGD, $\|w-w_{\infty}\|$ is close to (i.e. bounded by a constant factor of) $\|w-w^*\|$. Our Theorem~\ref{thm:implicit_reg} states that not only are these two distances close, but $w_{\infty}$ and $w^*$ are also actually close ($\|w_{\infty}-w^*\|^2=o(\epsilon)$), something that could not be inferred from the previous work.

As mentioned before, there have been a number of results on characterizing the implicit regularization properties of different algorithms in different contexts \cite{neyshabur2017geometry,ma2017implicit,gunasekar2017implicit,gunasekar2018characterizing,soudry2017implicit,gunasekar2018implicit,azizan2019stochastic}. The closest ones to our results, which concern mirror descent, are the works of \cite{gunasekar2018characterizing,azizan2019stochastic}. The authors in \cite{gunasekar2018characterizing} consider \emph{linear} overparameterized models, and show that \emph{if} SMD happens to converge to a global minimum, then that global minimum should be the one that is closest in Bregman divergence to the initialization, which can be shown by writing the KKT conditions. In fact, they do not provide any conditions for convergence and whether it converges with a fixed step size or not. In the authors' earlier work \cite{azizan2019stochastic}, the condition on the step size for which SMD converges to the aforementioned global minimum was derived, for linear models. Our results in this paper are for \emph{nonlinear} models, and we show that, under the specified conditions on the step size, these algorithms with a \emph{fixed} step size converge to the mentioned global minimum, which had not been shown in any of the previous work. Furthermore, assuming every data point is revisited again after some steps, the convergence we establish is \emph{deterministic}, and not in expectation or with high probability.

\section{Experimental Results}\label{sec:experiments}
In this section, we provide our experimental results, which consist of two main parts. In the first part, we evaluate the theoretical claims by running systematic experiments for different initializations and different mirrors, and evaluating the distances between the global minima achieved and the initializations, in different Bregman divergences. In the second part, we assess the generalization error of different mirrors, which correspond to different regularizers, in order to understand which regularizer performs better.

\begin{figure}[t]
\begin{center}
\includegraphics[height = 3.6cm]{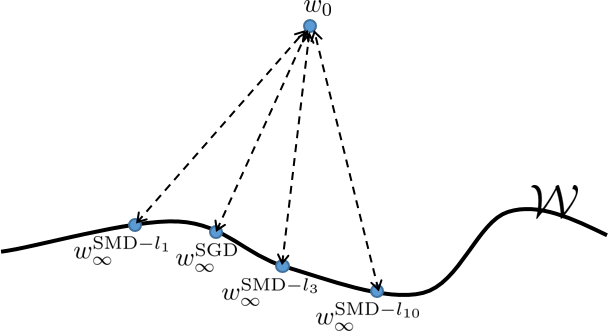}
\end{center}
\caption{An illustration of the experiments in Table~\ref{fixed_initial_CIFAR10} }\label{fig:fixed_initial}
\end{figure}

\begin{table}[t]
\centering
\caption{Fixed Initialization. Distances from final points (global minima) obtained by different algorithms (columns) from the same initialization (Fig.\ref{fig:fixed_initial}), measured in different Bregman divergences (rows). \textbf{First Row}: The closest point to $w_0$ in $\ell_1$ Bregman divergence, among the four final points, is exactly the one obtained by SMD with 1-norm potential. \textbf{Second Row}: The closest point to $w_0$ in $\ell_2$ Bregman divergence (Euclidean distance), among the four final points, is exactly the one obtained by SGD. \textbf{Third Row}: The closest point to $w_0$ in $\ell_3$ Bregman divergence, among the four final points, is exactly the one obtained by SMD with 3-norm potential. \textbf{Fourth Row}: The closest point to $w_0$ in $\ell_{10}$ Bregman divergence, among the four final points, is exactly the one obtained by SMD with 10-norm potential.}
\label{fixed_initial_CIFAR10}
\begin{tabular}{ccccc}
\hline
 & SMD 1-norm & SMD 2-norm (SGD) & SMD 3-norm & SMD 10-norm \\
\hline
 1-norm BD & \cellcolor{blue!25}141 &  $ 9.19 \times 10^{3} $  &  $ 4.1 \times 10^{4} $  &  $ 2.34 \times 10^{5} $  \\

2-norm BD &  $ 3.15 \times 10^{3} $  & \cellcolor{blue!25}562 &  $ 1.24 \times 10^{3} $  &  $ 6.89 \times 10^{3} $  \\

3-norm BD &  $ 4.31 \times 10^{4} $  & 107 & \cellcolor{blue!25}53.5 &  $ 1.85 \times 10^{2} $  \\

10-norm BD &  $ 6.83 \times 10^{13} $  & 972 &  $ 7.91 \times 10^{-5} $  &  \cellcolor{blue!25}$ 2.72 \times 10^{-8} $  \\
\hline
\end{tabular} 
\end{table}%

\begin{figure}[!ht]
\begin{center}
\includegraphics[height = 3.6cm]{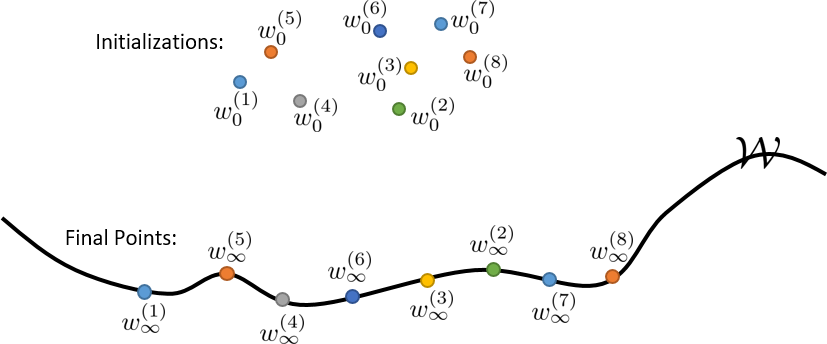}
\end{center}
\caption{An illustration of the experiments in Table~\ref{fixed_mirror_CIFAR10} }\label{fig:fixed_mirror}
\end{figure}

\begin{table}[!ht]
\centering
\caption{Fixed Mirror: SGD. Pairwise distances between different initial points and the final points obtained from them by SGD (Fig~\ref{fig:fixed_mirror}). \textbf{Row i}: The closest final point to the initial point $i$, among all the eight final points, is exactly the one obtained by the algorithm from initialization $i$.}
\label{fixed_mirror_CIFAR10}
\resizebox{\columnwidth}{!}{
\begin{tabular}{ccccccccc}
\hline
& Final 1 & Final 2 & Final 3 & Final 4 & Final 5 & Final 6 & Final 7 & Final 8 \\
\hline
Initial 1 & \cellcolor{blue!25} $ 6 \times 10^{2} $  &  $ 2.9 \times 10^{3} $  &  $ 2.8 \times 10^{3} $  &  $ 2.8 \times 10^{3} $  &  $ 2.8 \times 10^{3} $  &  $ 2.8 \times 10^{3} $  &  $ 2.8 \times 10^{3} $  &  $ 2.8 \times 10^{3} $  \\

Initial 2 &  $ 2.8 \times 10^{3} $  &  \cellcolor{blue!25} $ 6.1 \times 10^{2} $  &  $ 2.8 \times 10^{3} $  &  $ 2.8 \times 10^{3} $  &  $ 2.8 \times 10^{3} $  &  $ 2.8 \times 10^{3} $  &  $ 2.8 \times 10^{3} $  &  $ 2.8 \times 10^{3} $  \\

Initial 3 &  $ 2.8 \times 10^{3} $  &  $ 2.9 \times 10^{3} $  & \cellcolor{blue!25} $ 5.6 \times 10^{2} $  &  $ 2.8 \times 10^{3} $  &  $ 2.8 \times 10^{3} $  &  $ 2.8 \times 10^{3} $  &  $ 2.8 \times 10^{3} $  &  $ 2.8 \times 10^{3} $  \\

Initial 4 &  $ 2.8 \times 10^{3} $  &  $ 2.9 \times 10^{3} $  &  $ 2.8 \times 10^{3} $  & \cellcolor{blue!25} $ 5.9 \times 10^{2} $  &  $ 2.8 \times 10^{3} $  &  $ 2.8 \times 10^{3} $  &  $ 2.8 \times 10^{3} $  &  $ 2.8 \times 10^{3} $  \\

Initial 5 &  $ 2.8 \times 10^{3} $  &  $ 2.9 \times 10^{3} $  &  $ 2.8 \times 10^{3} $  &  $ 2.8 \times 10^{3} $  & \cellcolor{blue!25} $ 5.7 \times 10^{2} $  &  $ 2.8 \times 10^{3} $  &  $ 2.8 \times 10^{3} $  &  $ 2.8 \times 10^{3} $  \\

Initial 6 &  $ 2.8 \times 10^{3} $  &  $ 2.9 \times 10^{3} $  &  $ 2.8 \times 10^{3} $  &  $ 2.8 \times 10^{3} $  &  $ 2.8 \times 10^{3} $  &  \cellcolor{blue!25} $ 5.6 \times 10^{2} $  &  $ 2.8 \times 10^{3} $  &  $ 2.8 \times 10^{3} $  \\

Initial 7 &  $ 2.8 \times 10^{3} $  &  $ 2.9 \times 10^{3} $  &  $ 2.8 \times 10^{3} $  &  $ 2.8 \times 10^{3} $  &  $ 2.8 \times 10^{3} $  &  $ 2.8 \times 10^{3} $  & \cellcolor{blue!25} $ 6 \times 10^{2} $  &  $ 2.8 \times 10^{3} $  \\

Initial 8 &  $ 2.8 \times 10^{3} $  &  $ 2.9 \times 10^{3} $  &  $ 2.8 \times 10^{3} $  &  $ 2.8 \times 10^{3} $  &  $ 2.8 \times 10^{3} $  &  $ 2.8 \times 10^{3} $  &  $ 2.8 \times 10^{3} $  & \cellcolor{blue!25} $ 5.8 \times 10^{2} $  \\
\hline
\end{tabular} }
\end{table}%

\subsection{Do SMDs Converge to the Closest Point in Bregman Divergence?}
While accessing all the points on $\mathcal{W}$ and finding the closest one is impossible, we design systematic experiments to test this claim. We run experiments on some standard deep learning problems, namely, a standard CNN on MNIST~\cite{lecun1998gradient} and the ResNet-18~\cite{he2016deep} on CIFAR-10~\cite{krizhevsky2009learning}.
We train the models from different initializations, and with different mirror descents from each particular initialization, until we reach $100\%$ training accuracy, i.e., a point on $\mathcal{W}$.
We randomly initialize the parameters of the networks around zero. We choose 6 independent initializations for the CNN, and 8 for ResNet-18, and for each initialization, we run different SMD algorithms with the following four potential functions: (a) $\ell_1$ norm, (b) $\ell_2$ norm (which is SGD), (c) $\ell_3$ norm, (d) $\ell_{10}$ norm (as a surrogate for $\ell_{\infty}$). See Appendix~\ref{sec:more_experiments} for more details on the experiments.

We measure the distances between the initializations and the global minima obtained from different mirrors and different initializations, in different Bregman divergences. Table~\ref{fixed_initial_CIFAR10}, and Table~\ref{fixed_mirror_CIFAR10} show some examples among different mirrors and different initializations, respectively. Fig.~\ref{fig:row} shows the distances between a particular initial point and all the final points obtained from different initializations and different mirrors (the distances are often orders of magnitude different, so we show them in logarithmic scale). The global minimum achieved by any mirror from any initialization is the closest in the correct Bregman divergence, among all mirrors, among all initializations, and among both. This trend is very consistent among all our experiments, which can be found in Appendix~\ref{sec:more_experiments}.

\begin{figure}[h]
  \begin{minipage}[b]{0.49\textwidth}
    \centering
    \includegraphics[height=6cm]{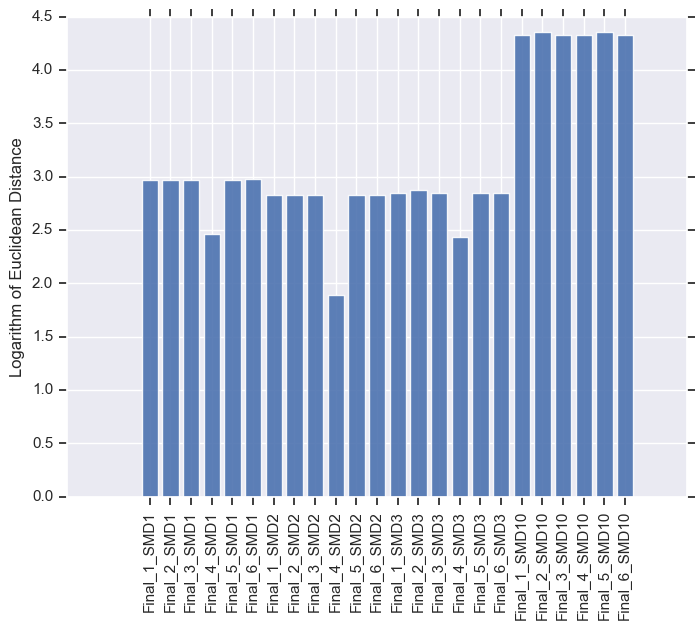}\\
     MNIST. SGD Starting from Initial 4
  \end{minipage}
  \hfill 
  \begin{minipage}[b]{0.49\textwidth}
    \centering
    \includegraphics[height=6cm]{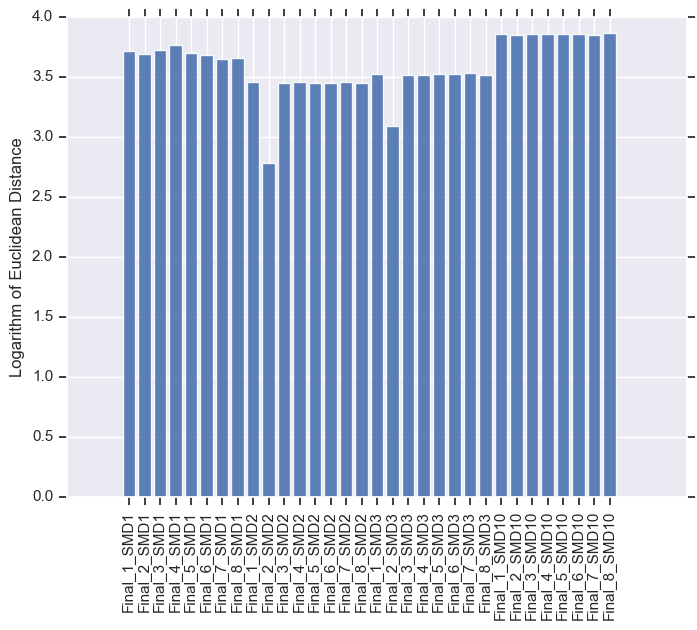}\\
    CIFAR-10. SGD Starting from Initial 2
  \end{minipage}
  \caption{Distances between a particular initial point and all the final points obtained by both different initializations and different mirrors. The smallest distance, among all initializations and all mirrors, corresponds exactly to the final point obtained from that initial point by SGD. This trend is observed consistently for all other mirror descents and all initializations (see the results in Tables~\ref{fig:MNIST_full} and \ref{fig:CIFAR-10_full} in the appendix).}\label{fig:row}
  \end{figure}

\subsection{Distribution of the Weights of the Network}
One may be curious to see how the final weights obtained by these different mirrors look like, and whether, for example, mirror descent corresponding to the $\ell_1$-norm potential induces sparsity. Fig~\ref{fig:hist_comparison} shows the histogram of the absolute value of the weights for different SMDs. The histogram of the $\ell_1$-SMD has more weights at and close to zero, which again confirms that it induces sparsity. However, as will be shown in the next section, this is not necessarily good for generalization. The histogram of the $\ell_2$-SMD (SGD) looks almost identical to the histogram of the initialization, whereas the $\ell_3$ and $\ell_{10}$ histograms are shifted to the right, so much so that almost all weights in the $\ell_{10}$ solution are non-zero. See Appendix~\ref{sec:more_experiments} for individual histograms and more details.

\begin{figure}[!hb]
\begin{center}
\includegraphics[height = 6cm]{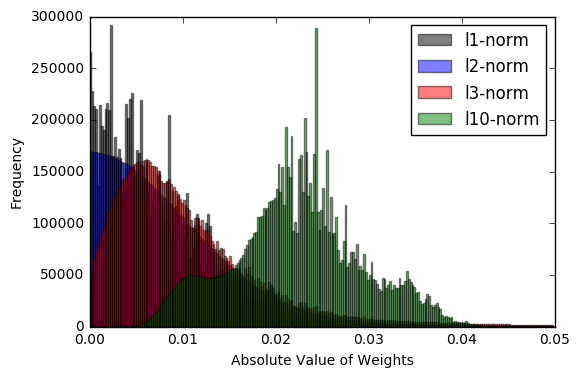}
\end{center}
\caption{Histogram of the absolute value of the final weights in the network for different SMD algorithm with different potentials. Note that each of the four histograms corresponds to an $11\times 10^6$-dimensional weight vector that perfectly interpolates the data. Even though the weights remain quite small, the histograms are drastically different. $\ell_{1}$-SMD induces sparsity on the weights, as expected. SGD does not seem to change the distribution of the weights significantly. $\ell_{3}$-SMD starts to reduce the sparsity, and $\ell_{10}$ shifts the distribution of the weights significantly, so much so that almost all the weights are non-zero.}\label{fig:hist_comparison}
\end{figure} 

\subsection{Generalization Errors of Different Mirrors}
We compare the performance of the SMD algorithms discussed before on the test set. For MNIST, perhaps not surprisingly, all the four SMD algorithms achieve around $99\%$ or higher accuracy. For CIFAR-10, however, there is a significant difference between the test errors of different mirrors/regularizers on the same architecture. Fig.~\ref{fig:gener} shows the test accuracies of different algorithms with eight random initializations around zero, as discussed before.
Counter-intuitively, $\ell_{10}$ performs consistently best, while $\ell_1$ performs consistently worse. This result suggests the importance of a comprehensive study of the role of regularization, and the choice of the best regularizer, to improve the generalization performance of deep neural networks. 

\begin{figure}[!hb]
\begin{center}
\includegraphics[height = 7cm]{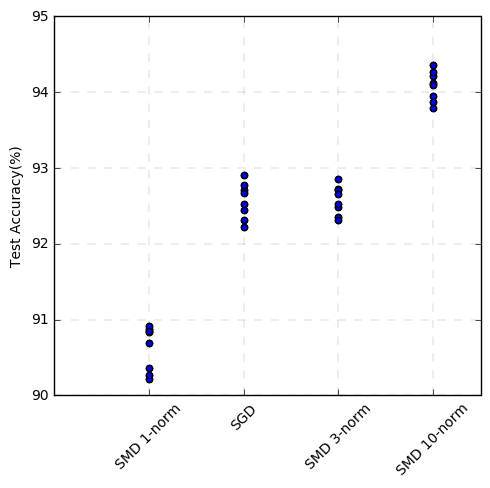}
\end{center}
\caption{Generalization performance of different SMD algorithms on the CIFAR-10 dataset using ResNet-18. $\ell_{10}$ performs consistently better, while $\ell_1$ performs consistently worse.}\label{fig:gener}
\end{figure} 


\bibliography{references}
\bibliographystyle{plain}

\newpage
\begin{center}
{\huge Supplementary Material}
\end{center}
\appendix

\section{Proofs of the Theoretical Results}

In this section, we prove the main theoretical results. The proofs are based on a fundamental identity about the iterates of SMD, which holds for all mirrors and all overparametereized (even nonlinear) models (Lemma~\ref{lem:equality_SMD}). We first prove this identity, and then use it to prove the convergence and implicit regularization results. 
\subsection{Fundamental Identity of SMD}
Let us prove the fundamental identity.
\begin{replemma}{lem:equality_SMD}
For any model $f(\cdot,\cdot)$, any differentiable loss $\ell(\cdot)$, any parameter $w\in\mathcal{W}$, and any step size $\eta>0$, the following relation holds for the SMD iterates $\{w_i\}$
\begin{equation}\tag{\ref{eq:equality_SMD}}
D_{\psi}(w,w_{i-1}) = D_{\psi}(w,w_i) +D_{\psi-\eta L_i}(w_i,w_{i-1})+\eta L_i(w_i) +\eta D_{L_i}(w,w_{i-1}) ,
\end{equation}
for all $i\geq 1$.
\end{replemma}

\begin{proof}[Proof of Lemma~\ref{lem:equality_SMD}]
Let us start by expanding the Bregman divergence $D_{\psi}(w,w_i)$ based on its definition
\[ D_{\psi}(w,w_i) = \psi(w)-\psi(w_i)-\nabla\psi(w_i)^T(w-w_i).\]
By plugging the SMD update rule $\nabla\psi(w_i)=\nabla\psi(w_{i-1})-\eta\nabla L_i(w_{i-1})$ into this, we can write it as
\begin{equation}
D_{\psi}(w,w_i)= \psi(w)-\psi(w_i)-\nabla\psi(w_{i-1})^T(w-w_i) + \eta\nabla L_i(w_{i-1})^T(w-w_i) .
\end{equation}
Using the definition of Bregman divergence for $(w,w_{i-1})$ and $(w_i,w_{i-1})$, i.e., $D_{\psi}(w,w_{i-1}) = \psi(w)-\psi(w_{i-1})-\nabla\psi(w_{i-1})^T(w-w_{i-1})$ and $D_{\psi}(w_i,w_{i-1}) = \psi(w_i)-\psi(w_{i-1})-\nabla\psi(w_{i-1})^T(w_i-w_{i-1})$, we can express this as
\begin{align}
D_{\psi}(w,w_i) &= D_{\psi}(w,w_{i-1})+\psi(w_{i-1})+\nabla\psi(w_{i-1})^T(w-w_{i-1}) -\psi(w_i)\notag\\
&\hspace{4cm} -\nabla\psi(w_{i-1})^T(w-w_i) + \eta\nabla L_i(w_{i-1})^T(w-w_i)\\
&= D_{\psi}(w,w_{i-1})+\psi(w_{i-1})-\psi(w_i)+\nabla\psi(w_{i-1})^T(w_i-w_{i-1})\notag\\
&\hspace{7.5cm} +\eta\nabla L_i(w_{i-1})^T(w-w_i)\\
&= D_{\psi}(w,w_{i-1})-D_{\psi}(w_i,w_{i-1}) +\eta\nabla L_i(w_{i-1})^T(w-w_i) .
\end{align}
Expanding the last term using $w-w_i = (w-w_{i-1})-(w_i-w_{i-1})$, and following the definition of $D_{L_i}(.,.)$ from \eqref{eq:D_Li} for $(w,w_{i-1})$ and $(w_i,w_{i-1})$, we have
\begin{align}
D_{\psi}(w,w_i) &= D_{\psi}(w,w_{i-1})-D_{\psi}(w_i,w_{i-1}) +\eta\nabla L_i(w_{i-1})^T(w-w_{i-1})\notag\\
&\hspace{7cm} -\eta\nabla L_i(w_{i-1})^T(w_i-w_{i-1})\\
&= D_{\psi}(w,w_{i-1})-D_{\psi}(w_i,w_{i-1}) +\eta\left(L_i(w)-L_i(w_{i-1})-D_{L_i}(w,w_{i-1})\right)\notag\\
&\hspace{5cm} -\eta\left(L_i(w_i)-L_i(w_{i-1})-D_{L_i}(w_i,w_{i-1})\right)\\
&= D_{\psi}(w,w_{i-1})-D_{\psi}(w_i,w_{i-1}) +\eta\left(L_i(w)-D_{L_i}(w,w_{i-1})\right)\notag\\
&\hspace{6.5cm} -\eta\left(L_i(w_i)-D_{L_i}(w_i,w_{i-1})\right)
\end{align}
Note that for all $w\in\mathcal{W}$, we have $L_i(w)=0$. Therefore, for all $w\in\mathcal{W}$
\begin{equation}
D_{\psi}(w,w_i) = D_{\psi}(w,w_{i-1}) - D_{\psi}(w_i,w_{i-1}) - \eta D_{L_i}(w,w_{i-1}) -\eta L_i(w_i) +\eta D_{L_i}(w_i,w_{i-1}) .
\end{equation}
Combining the second and the last terms in the right-hand side leads to
\begin{equation}
D_{\psi}(w,w_i) = D_{\psi}(w,w_{i-1}) - D_{\psi-\eta L_i}(w_i,w_{i-1}) - \eta D_{L_i}(w,w_{i-1}) -\eta L_i(w_i),
\end{equation}
for all $w\in\mathcal{W}$, which concludes the proof.
\end{proof}

\subsection{Convergence of SMD to the Interpolating Set}
Now that we have proved Lemma~\ref{lem:equality_SMD}, we can use it to prove our main results, in a remarkably simple fashion. Let us first prove the convergence of SMD to the set of solutions.
\begin{repassumption}{assump1}
Denote the initial point by $w_0$. There exists $w\in\mathcal{W}$ and a region $\mathcal{B}=\{w'\in\mathbb{R}^p\ |\ D_{\psi}(w,w')\leq\epsilon\}$ containing $w_0$, such that $D_{L_i}(w,w')\geq 0, i=1,\dots,n$, for all $w'\in\mathcal{B}$.
\end{repassumption}
\begin{reptheorem}{thm:convergence}
Consider the set of interpolating parameters $\mathcal{W}=\{w\in\mathbb{R}^p\ |\ f(x_i,w)=y_i, i=1,\dots,n\}$, and the SMD iterates given in \eqref{eq:SMD}, where every data point is revisited after some steps. Under Assumption~\ref{assump1}, for sufficiently small step size, i.e., for any $\eta>0$ for which $\psi(\cdot)-\eta L_i(\cdot)$ is strictly convex for all i, the following holds.
\begin{enumerate}
    \item All the iterates $\{w_i\}$ remain in $\mathcal{B}$.
    \item The iterates converge (to $w_{\infty}$).
    \item $w_{\infty}\in\mathcal{W}$.
\end{enumerate}
\end{reptheorem}

\begin{proof}[Proof of Theorem~\ref{thm:convergence}]
First we show that all the iterates wil remain in $\mathcal{B}$. Recall the identity of SMD from Lemma~\ref{lem:equality_SMD}:
\begin{equation}\tag{\ref{eq:equality_SMD}}
D_{\psi}(w,w_{i-1}) = D_{\psi}(w,w_i) +D_{\psi-\eta L_i}(w_i,w_{i-1})+\eta L_i(w_i) +\eta D_{L_i}(w,w_{i-1})
\end{equation}
which holds for all $w\in\mathcal{W}$. If $w_{i-1}$ is in the region $\mathcal{B}$,  we know that the last term $D_{L_i}(w,w_{i-1})$ is non-negative. Furthermore, if the step size is small enough that $\psi(\cdot)-\eta L_i(\cdot)$ is strictly convex, the second term $D_{\psi-\eta L_i}(w_i,w_{i-1})$ is a Bregman divergence and is non-negative. Since the loss is non-negative, $\eta L_i(w_i)$ is always non-negative. As a result, we have
\begin{equation}
    D_{\psi}(w,w_{i-1}) \geq D_{\psi}(w,w_i),
\end{equation}
This implies that $D_{\psi}(w,w_i)\leq \epsilon$, which means $w_{i}$ is in $\mathcal{B}$ too. Since $w_0$ is in $\mathcal{B}$, $w_1$ will be in $\mathcal{B}$, and therefore, $w_2$ will be in $\mathcal{B}$, and similarly all the iterates will remain in $\mathcal{B}$.

Next, we prove that the iterates converge and $w_{\infty}\in\mathcal{W}$. If we sum up identity~\eqref{eq:equality_SMD} for all $i=1,\dots,T$, the first terms on the right- and left-hand side cancel each other telescopically, and we have
\begin{equation}
D_{\psi}(w,w_{0}) = D_{\psi}(w,w_T) +\sum_{i=1}^T \left[ D_{\psi-\eta L_i}(w_i,w_{i-1})+\eta L_i(w_i) +\eta D_{L_i}(w,w_{i-1}) \right] .
\end{equation}
Since $D_{\psi}(w,w_T)\geq 0$, we have 
$
\sum_{i=1}^T \left[ D_{\psi-\eta L_i}(w_i,w_{i-1})+\eta L_i(w_i) +\eta D_{L_i}(w,w_{i-1}) \right]\leq D_{\psi}(w,w_{0}) .
$
If we take $T\to\infty$, the sum still has to remain bounded, i.e., 
\begin{equation}
\sum_{i=1}^{\infty} \left[ D_{\psi-\eta L_i}(w_i,w_{i-1})+\eta L_i(w_i) +\eta D_{L_i}(w,w_{i-1}) \right]\leq D_{\psi}(w,w_{0}).
\end{equation}
Since the step size is small enough that $\psi(\cdot)-\eta L_i(\cdot)$ is strictly convex for all $i$, the first term $D_{\psi-\eta L_i}(w_i,w_{i-1})$ is non-negative. The second term $\eta L_i(w_i)$ is non-negative because of the non-negativity of the loss. Finally, the last term $D_{L_i}(w,w_{i-1})$ is non-negative because $w_{i-1}\in\mathcal{B}$ for all $i$. Hence, all the three terms in the summand are non-negative, and because the sum is bounded, they should go to zero as $i\to\infty$. In particular,
\begin{equation}
    D_{\psi-\eta L_i}(w_i,w_{i-1})\to 0
\end{equation}
implies $w_i\to w_{i-1}$, i.e., convergence ($w_i\to w_{\infty}$), and further
\begin{equation}
    \eta L_i(w_i)\to 0.
\end{equation}
This implies that all the individual losses are going to zero, and since every data point is being revisited after some steps, all the data points are being fit. Therefore, $w_{\infty}\in\mathcal{W}$.
\end{proof}

\subsection{Closeness of the Final Point to the Regularized Solution}
In this section, we show that with the additional Assumption~\ref{assump2} (which is equivalent to $f_i(\cdot)$ having bounded Hessian in $\mathcal{B}$), not only do the iterates remain in $\mathcal{B}$ and converge to the set $\mathcal{W}$, but also they converge to a point which is very close to $w^*$ (the closest solution to the initial point, in Bregman divergence). The proof is again based on our fundamental identity for SMD.
\begin{repassumption}{assump2}
Consider the region $\mathcal{B}$ in Assumption~\ref{assump1}. $f_i(\cdot)$ have bounded gradient and Hessian on the convex hull of $\mathcal{B}$, i.e., $\|\nabla f_i(w')\|\leq \gamma$, and
$\alpha\leq\lambda_{\min}(H_{f_i}(w'))\leq\lambda_{\max}(H_{f_i}(w'))\leq\beta, i=1,\dots,n$, for all $w'\in\mathrm{conv}\ \mathcal{B}$.
\end{repassumption}
\begin{reptheorem}{thm:implicit_reg}
Define $w^*=\argmin_{w\in\mathcal{W}} D_{\psi}(w,w_0)$. Under the assumptions of Theorem~\ref{thm:convergence}, and Assumption~\ref{assump2}, the following holds.
\begin{enumerate}
    \item $D_{\psi}(w_{\infty},w_0)=D_{\psi}(w^*,w_0)+o(\epsilon)$
    \item $D_{\psi}(w^*,w_{\infty})=o(\epsilon)$
\end{enumerate}
\end{reptheorem}

\begin{proof}[Proof of Theorem~\ref{thm:implicit_reg}]
Recall the identity of SMD from Lemma~\ref{lem:equality_SMD}:
\begin{equation}\tag{\ref{eq:equality_SMD}}
D_{\psi}(w,w_{i-1}) = D_{\psi}(w,w_i) +D_{\psi-\eta L_i}(w_i,w_{i-1})+\eta L_i(w_i) +\eta D_{L_i}(w,w_{i-1})
\end{equation}
which holds for all $w\in\mathcal{W}$. Summing the identity for all $i\geq 1$, we have
\begin{equation}\label{eq:sum_SMD_inf}
D_{\psi}(w,w_{0}) = D_{\psi}(w,w_{\infty}) +\sum_{i=1}^{\infty} \left[ D_{\psi-\eta L_i}(w_i,w_{i-1})+\eta L_i(w_i) +\eta D_{L_i}(w,w_{i-1}) \right] .
\end{equation}
for all $w\in\mathcal{W}$. Note that the only terms in the right-hand side which depend on $w$ are the first one $D_{\psi}(w,w_{\infty})$ and the last one $\eta \sum_{i=1}^{\infty} D_{L_i}(w,w_{i-1})$. In what follows, We will argue that, within $\mathcal{B}$, the dependence on $w$ in the last term is weak and therefore $w_{\infty}$ is close to $w^*$.

To further spell out the dependence on $w$ in the last term, let us expand $D_{L_i}(w,w_{i-1})$
\begin{align}
D_{L_i}(w,w_{i-1})&=0-L_i(w_{i-1})-\nabla L_i(w_{i-1})^T(w-w_{i-1})\\
&=-L_i(w_{i-1})+\ell'(y_i-f_i(w_{i-1}))\nabla f_i(w_{i-1}))^T(w-w_{i-1})\label{eq:proof_D_Li}
\end{align}
for all $w\in\mathcal{W}$, where the first equality comes from the definition of $D_{L_i}(\cdot,\cdot)$ and the fact that $L_i(w)=0$ for $w\in\mathcal{W}$. The second equality is from taking the derivative of $L_i(\cdot)=\ell(y_i-f_i(\cdot))$ and evaluating it at $w_{i-1}$.

By Taylor expansion of $f_i(w)$ around $w_{i-1}$ and using Taylor's theorem (Lagrange's mean-value form), we have
\begin{equation}
f_i(w)=f_i(w_{i-1})+\nabla f_i(w_{i-1})^T(w-w_{i-1})+\frac{1}{2}(w-w_{i-1})^T H_{f_i}(\hat{w}_i)(w-w_{i-1}),
\end{equation}
for some $\hat{w}_i$ in the convex hull of $w$ and $w_{i-1}$. Since $f_i(w)=y_i$ for all $w\in\mathcal{W}$, it follows that
\begin{equation}
\nabla f_i(w_{i-1})^T(w-w_{i-1})=y_i-f_i(w_{i-1})-\frac{1}{2}(w-w_{i-1})^T H_{f_i}(\hat{w}_i)(w-w_{i-1}),
\end{equation}
for all $w\in\mathcal{W}$. Plugging this into \eqref{eq:proof_D_Li}, we have
\begin{equation}
D_{L_i}(w,w_{i-1})=-L_i(w_{i-1})+\ell'(y_i-f_i(w_{i-1}))\Big(y_i-f_i(w_{i-1})-\frac{1}{2}(w-w_{i-1})^T H_{f_i}(\hat{w}_i)(w-w_{i-1})\Big)
\end{equation}
for all $w\in\mathcal{W}$. Finally, by plugging this back into the identity~\eqref{eq:sum_SMD_inf}, we have
\begin{multline}
D_{\psi}(w,w_{0})= D_{\psi}(w,w_{\infty}) +\sum_{i=1}^{\infty} \Big[  D_{\psi-\eta L_i}(w_i,w_{i-1})+\eta L_i(w_i) -\eta L_i(w_{i-1})\\
+\eta\ell'(y_i-f_i(w_{i-1}))\big( y_i-f_i(w_{i-1})-\frac{1}{2}(w-w_{i-1})^T H_{f_i}(\hat{w}_i)(w-w_{i-1})\big)  \Big] .
\end{multline}
for all $w\in\mathcal{W}$. Note that this can be expressed as
\begin{equation}\label{eq:proof_D_psi}
D_{\psi}(w,w_{0})= D_{\psi}(w,w_{\infty}) + C -\sum_{i=1}^{\infty}
\frac{1}{2}\eta\ell'(y_i-f_i(w_{i-1}))(w-w_{i-1})^T H_{f_i}(\hat{w}_i)(w-w_{i-1}) ,
\end{equation}
for all $w\in\mathcal{W}$, where $C$ does not depend on $w$:
$$
C=\sum_{i=1}^{\infty} \left[  D_{\psi-\eta L_i}(w_i,w_{i-1})+\eta L_i(w_i) -\eta L_i(w_{i-1})+\eta\ell'(y_i-f_i(w_{i-1}))( y_i-f_i(w_{i-1})) \right] .
$$

From Theorem~\ref{thm:convergence}, we know that $w_{\infty}\in\mathcal{W}$. Therefore, by plugging it into equation~\eqref{eq:proof_D_psi}, and using the fact that $D_{\psi}(w_{\infty},w_{\infty})=0$, we have
\begin{equation}
D_{\psi}(w_{\infty},w_{0})= C -\sum_{i=1}^{\infty}
\frac{1}{2}\eta\ell'(y_i-f_i(w_{i-1}))(w_{\infty}-w_{i-1})^T H_{f_i}(w'_i)(w_{\infty}-w_{i-1}),
\end{equation}
where $w'_i$ is a point in the convex hull of $w_{\infty}$ and $w_{i-1}$ (and therefore also in $\mathrm{conv}\ \mathcal{B}$), for all $i$.
Similarly, by plugging $w^*$, which is also in $\mathcal{W}$, into \eqref{eq:proof_D_psi}, we have
\begin{equation}
D_{\psi}(w^*,w_{0})= D_{\psi}(w^*,w_{\infty}) + C -\sum_{i=1}^{\infty}
\frac{1}{2}\eta\ell'(y_i-f_i(w_{i-1}))(w^*-w_{i-1})^T H_{f_i}(w''_i)(w^*-w_{i-1}) ,
\end{equation}
where $w''_i$ is a point in the convex hull of $w^*$ and $w_{i-1}$ (and therefore also in $\mathrm{conv}\ \mathcal{B}$), for all $i$. Subtracting the last two equations from each other yields
\begin{multline}
D_{\psi}(w_{\infty},w_{0})-D_{\psi}(w^*,w_{0})= -D_{\psi}(w^*,w_{\infty})+\sum_{i=1}^{\infty}\frac{1}{2}\eta\ell'(y_i-f_i(w_{i-1}))\cdot\\
\left[(w^*-w_{i-1})^T H_{f_i}(w''_i)(w^*-w_{i-1})-(w_{\infty}-w_{i-1})^T H_{f_i}(w'_i)(w_{\infty}-w_{i-1})\right].
\end{multline}
Note that since all $w'_i$ and $w''_i$ are in  $\mathrm{conv}\ \mathcal{B}$, by Assumption~\ref{assump2}, we have
\begin{equation}
\alpha\|w_{\infty}-w_{i-1}\|^2\leq(w_{\infty}-w_{i-1})^T H_{f_i}(w'_i)(w_{\infty}-w_{i-1})\leq\beta\|w_{\infty}-w_{i-1}\|^2 ,
\end{equation}
and
\begin{equation}
\alpha\|w^*-w_{i-1}\|^2\leq(w^*-w_{i-1})^T H_{f_i}(w''_i)(w^*-w_{i-1})\leq\beta\|w^*-w_{i-1}\|^2 .
\end{equation}
Further, again since all the iterates $\{w_i\}$ are in $\mathcal{B}$, it follows that $\|w_{\infty}-w_{i-1}\|^2=O(\epsilon)$ and $\|w^*-w_{i-1}\|^2=O(\epsilon)$. As a result the difference of the two terms, i.e., $\big[(w^*-w_{i-1})^T H_{f_i}(w''_i)(w^*-w_{i-1})-(w_{\infty}-w_{i-1})^T H_{f_i}(w'_i)(w_{\infty}-w_{i-1})\big]$, is also $O(\epsilon)$, and we have
\begin{equation}
D_{\psi}(w_{\infty},w_{0})-D_{\psi}(w^*,w_{0})= -D_{\psi}(w^*,w_{\infty})
+\sum_{i=1}^{\infty}\eta\ell'(y_i-f_i(w_{i-1}))O(\epsilon).
\end{equation}
Now note that $\ell'(y_i-f_i(w_{i-1}))=\ell'(f_i(w)-f_i(w_{i-1}))=\ell'(\nabla f_i(\tilde{w}_i)^T (w-w_{i-1}))$ for some $\tilde{w}_i\in\mathrm{conv}\ \mathcal{B}$. Since $\|w-w_{i-1}\|^2=O(\epsilon)$ for all $i$, and since $\ell(\cdot)$ is differentiable and $f_i(\cdot)$ have bounded derivatives, it follows that $\ell'(y_i-f_i(w_{i-1}))=o(\epsilon)$. Furthermore, the sum is bounded. This implies that $D_{\psi}(w_{\infty},w_{0})-D_{\psi}(w^*,w_{0})= -D_{\psi}(w^*,w_{\infty}) +o(\epsilon)$, or equivalently
\begin{equation}
\big(D_{\psi}(w_{\infty},w_{0})-D_{\psi}(w^*,w_{0})\big)+D_{\psi}(w^*,w_{\infty})= 
o(\epsilon).
\end{equation}
The term in parentheses $D_{\psi}(w_{\infty},w_{0})-D_{\psi}(w^*,w_{0})$ is non-negative by definition of $w^*$. The second term $D_{\psi}(w^*,w_{\infty})$ is non-negative by convexity of $\psi$. Since both terms are non-negative and their sum is $o(\epsilon)$, each one of them is at most $o(\epsilon)$, i.e.
\begin{equation}
\begin{cases}
D_{\psi}(w_{\infty},w_{0})-D_{\psi}(w^*,w_{0})=o(\epsilon)\\
D_{\psi}(w^*,w_{\infty})=o(\epsilon)
\end{cases}
\end{equation}
which concludes the proof.
\end{proof}

\begin{repcorollary}{cor:implicit_reg}
For the initialization $w_0=\argmin_{w\in\mathbb{R}^p} \psi(w)$, under the conditions of Theorem~\ref{thm:implicit_reg}, $w^*=\argmin_{w\in\mathcal{W}}\psi(w)$ and the following holds.
\begin{enumerate}
    \item $\psi(w_{\infty})=\psi(w^*)+o(\epsilon)$
    \item $D_{\psi}(w^*,w_{\infty})=o(\epsilon)$
\end{enumerate}
\end{repcorollary}

\begin{proof}[Proof of Corollary~\ref{cor:implicit_reg}]
The proof is a straightforward application of Theorem~\ref{thm:implicit_reg}.
Note that we have
\begin{equation}
D_{\psi}(w,w_0)=\psi(w)-\psi(w_0)-\nabla\psi(w_0)^T(w-w_0)
\end{equation}
for all $w$. When $w_0=\argmin_{w\in\mathbb{R}^p}\psi(w)$, it follows that $\nabla\psi(w_0)=0$, and
\begin{equation}
D_{\psi}(w,w_0)=\psi(w)-\psi(w_0).
\end{equation}
In particular, by plugging in $w_{\infty}$ and $w^*$, we have $D_{\psi}(w_{\infty},w_0)=\psi(w_{\infty})-\psi(w_0)$ and $D_{\psi}(w^*,w_0)=\psi(w^*)-\psi(w_0)$. Subtracting the two equations from each other yields
\begin{equation}
D_{\psi}(w_{\infty},w_0)-D_{\psi}(w^*,w_0)=\psi(w_{\infty})-\psi(w^*),
\end{equation}
which along with the application of Theorem~\ref{thm:implicit_reg} concludes the proof.
\end{proof}

\subsection{Closeness to the Interpolating Set in Highly Overparameterized Models}\label{sec:highly_over}
As we mentioned earlier, it has been argued in a number of recent papers that for highly overparameterized models, any random initial point is, whp, close to the solution set $\mathcal{W}$ \cite{azizan2019stochastic,li2018learning,du2018gradient,allen2019convergence}. In the highly overparameterized regime, $p\gg n$, and so the dimension of the manifold $\mathcal{W}$, which is $p-n$, is very large. For simplicity, we outline an argument for the case of Euclidean distance, bearing in mind that a similar argument can be used for general Bregman divergence. Note that the distance of an arbitrarily chosen $w_0$ to ${\mathcal W}$ is given by
\begin{equation*}
\begin{aligned}
\min_{w} \quad & \|w-w_0\|^2\\
\textrm{s.t.} \quad & y = f(x,w)
\end{aligned}
\end{equation*}
where $y = \mbox{vec}(y_i,i=1,\ldots,n)$ and $f(x,w) = \mbox{vec}(f(x_i,w),i=1,\ldots,n)$. This can be approximated by
\begin{equation*}
\begin{aligned}
\min_{w} \quad & \|w-w_0\|^2\\
\textrm{s.t.} \quad & y \approx f(x,w_0)+\nabla f(x,w_0)^T(w-w_0)
\end{aligned}
\end{equation*}
where $\nabla f(x,w_0)^T = \mbox{vec}(\nabla f(x_i,w)^T, i=1,\ldots,n)$ is the $n\times p$ Jacobian matrix. The latter optimization can be solved to yield
\begin{equation}
    \|w_*-w_0\|^2 \approx (y-f(x,w_0))^T\left(\nabla f(x,w_0)^T\nabla f(x,w_0) \right)^{-1}(y-f(x,w_0))
\end{equation}
Note that $\nabla f(x,w_0)^T\nabla f(x,w_0)$ is an $n\times n$ matrix consisting of the sum of $p$ outer products. When the $x_i$ are sufficiently random, and $p\gg n$, it is not unreasonable to assume that whp
\[ \lambda_{\min}\left(\nabla f(x,w_0)^T\nabla f(x,w_0)\right) = \Omega(p), \]
from which we conclude 
\begin{equation}
    \|w_*-w_0\|^2 \approx \|y-f(x,w_0)\|^2\cdot O(\frac{1}{p})  = O(\frac{n}{p}),
\end{equation}
since $y-f(x,w_0)$ is $n$-dimensional. The above implies that $w_0$ is close to $w_*$ and hence ${\mathcal W}$.

\newpage
\section{More Details on the Experimental Results}\label{sec:more_experiments}
In order to evaluate the claim, we run systematic experiments on some standard deep learning problems.

\textbf{Datasets.} We use the standard MNIST~\cite{lecun1998gradient} and CIFAR-10~\cite{krizhevsky2009learning} datasets.

\textbf{Architectures.} For MNIST, we use a 4-layer convolutional neural network (CNN) with 2 convolution layers and 2 fully connected layers. The convolutional layers and the fully connected layers are picked wide enough to obtain $2\times10^6$ trainable parameters. Since MNIST dataset has 60,000 training samples, the number of parameters is significantly larger than the number of training data points, and the problem is highly overparameterized. 
For the CIFAR-10 dataset, we use the standard ResNet-18~\cite{he2016deep} architecture without any modifications. CIFAR-10 has 50,000 training samples and with the total number of $11\times10^6$ parameters in ResNet-18, the problem is again highly overparameterized.

\textbf{Loss Function.} We use the cross-entropy loss as the loss function in our training.
We train the models from different initializations, and with different mirror descents from each particular initialization, until we reach $100\%$ training accuracy, i.e., until we hit $\mathcal{W}$.

\textbf{Initialization.} We randomly initialize the parameters of the networks around zero ($\mathcal{N}(0, 0.0001)$). We choose 6 independent initializations for the CNN, and 8 for ResNet-18, and for each initialization, we run the following 4 different SMD algorithms.

\textbf{Algorithms.} We use the mirror descent algorithms defined by the norm potential $\psi(w)=\frac{1}{q}\|w\|_q^q$ for the following four different norms: (a) $\ell_1$ norm, i.e., $q=1+\epsilon$, (b) $\ell_2$ norm, i.e., $q=2$ (which is SGD), (c) $\ell_3$ norm, i.e., $q=3$, (d) $\ell_{10}$ norm, i.e., $q=10$ (as a surrogate for $\ell_{\infty}$ norm). The update rule can be expressed as follows.
\begin{multline}
w_{i, j} = \Big|  | w_{i-1,j} | ^ {q-1} \operatorname{sign}(w_{i-1,j}) - \eta \nabla L_i(w_{i-1})_j\Big| ^{\frac{1}{q-1}}\cdot\\
\operatorname{sign}\Big( | w_{i-1,j} | ^ {q-1} \operatorname{sign}(w_{i-1,j}) - \eta\nabla L_i(w_{i-1})_j\Big) ,
\end{multline}
where $w_{i-1,j}$ denotes the $j$-th element of the $w_{i-1}$ vector.

We use a fixed step size $\eta$. The step size is chosen to obtain convergence to global minima. 

\subsection{MNIST Experiments}
\subsubsection{Closest Minimum for Different Mirror Descents with Fixed Initialization}
We provide the distances from final points (global minima) obtained by different algorithms from the same initialization, measured in different Bregman divergences for MNIST classification task using a standard CNN. Note that in all tables the smallest element in each row is on the diagonal, which means the point achieved by each mirror has the smallest Bregman divergence to the initialization corresponding to that mirror, among all mirrors. Tables~\ref{MNIST_init1}, \ref{MNIST_init2}, \ref{MNIST_init3}, \ref{MNIST_init4}, \ref{MNIST_init5}, \ref{MNIST_init6} depict these results for 6 different initializations.
The rows are the distance metrics used as the Bregman Divergences with specified potentials. The columns are the global minima obtained using specified SMD algorithms.
\begin{table}[htb] 
\centering 
\caption{MNIST Initial Point 1}
\label{MNIST_init1}
\begin{tabular}{ccccc}
\hline
 & SMD 1-norm & SMD 2-norm (SGD) & SMD 3-norm & SMD 10-norm \\
\hline
 1-norm BD & \cellcolor{blue!25}2.767 & 937.8 &  $ 1.05 \times 10^{4} $  &  $ 1.882 \times 10^{5} $  \\

2-norm BD & 301.6 & \cellcolor{blue!25}58.61 & 261.3 &  $ 2.118 \times 10^{4} $  \\

3-norm BD & 1720 & 37.45 & \cellcolor{blue!25}7.143 & 2518 \\

10-norm BD &  $ 7.453 \times 10^{8} $  & 773.4 & 0.2939 & \cellcolor{blue!25}0.003545 \\
\hline
\end{tabular} 
\end{table}%

\begin{table}[!h]
\centering 
\caption{MNIST Initial Point 2}
\label{MNIST_init2}
\begin{tabular}{ccccc}
\hline
 & SMD 1-norm & SMD 2-norm (SGD) & SMD 3-norm & SMD 10-norm \\
\hline
 1-norm BD & \cellcolor{blue!25}2.78 & 945 &  $ 1.37 \times 10^{4} $  &  $ 2.01 \times 10^{5} $  \\

2-norm BD & 292 & \cellcolor{blue!25}59.3 & 374 &  $ 2.29 \times 10^{4} $  \\

3-norm BD &  $ 1.51 \times 10^{3} $  & 38.6 & \cellcolor{blue!25}11.6 &  $ 2.71 \times 10^{3} $  \\

10-norm BD &  $ 1.06 \times 10^{8} $  & 831 & 0.86 & \cellcolor{blue!25}0.00321 \\
\hline
\end{tabular} 
\end{table}%

\begin{table}[!h]
\centering
\caption{MNIST Initial Point 3}
\label{MNIST_init3}
\begin{tabular}{ccccc}
\hline
 & SMD 1-norm & SMD 2-norm (SGD) & SMD 3-norm & SMD 10-norm \\
\hline
 1-norm BD & \cellcolor{blue!25}3.02 & 968 &  $ 1.06 \times 10^{4} $  &  $ 1.9 \times 10^{5} $  \\

2-norm BD & 291 & \cellcolor{blue!25}60.9 & 272 &  $ 2.12 \times 10^{4} $  \\

3-norm BD &  $ 1.49 \times 10^{3} $  & 39.1 & \cellcolor{blue!25}7.82 &  $ 2.49 \times 10^{3} $  \\

10-norm BD &  $ 1.1 \times 10^{8} $  & 900 & 0.411 & \cellcolor{blue!25}0.00318 \\
\hline
\end{tabular} 
\end{table}%

\begin{table}[!h]
\centering
\caption{MNIST Initial Point 4}
\label{MNIST_init4}
\begin{tabular}{ccccc}
\hline
 & SMD 1-norm & SMD 2-norm (SGD) & SMD 3-norm & SMD 10-norm \\
\hline
 1-norm BD & \cellcolor{blue!25}2.78 &  $ 1.21 \times 10^{3} $  &  $ 1.08 \times 10^{4} $  &  $ 1.92 \times 10^{5} $  \\

2-norm BD & 291 & \cellcolor{blue!25}77.3 & 271 &  $ 2.15 \times 10^{4} $  \\

3-norm BD &  $ 1.48 \times 10^{3} $  & 49.7 & \cellcolor{blue!25}7.56 &  $ 2.52 \times 10^{3} $  \\

10-norm BD &  $ 9.9 \times 10^{7} $  &  $ 1.72 \times 10^{3} $  & 0.352 & \cellcolor{blue!25}0.00296 \\
\hline
\end{tabular} 
\end{table}%

\begin{table}[!h]
\centering
\caption{MNIST Initial Point 5}
\label{MNIST_init5}
\begin{tabular}{ccccc}
\hline
 & SMD 1-norm & SMD 2-norm (SGD) & SMD 3-norm & SMD 10-norm \\
\hline
 1-norm BD & \cellcolor{blue!25}2.79 & 958 &  $ 1.08 \times 10^{4} $  &  $ 2 \times 10^{5} $  \\

2-norm BD & 292 & \cellcolor{blue!25}60.4 & 271 &  $ 2.28 \times 10^{4} $  \\

3-norm BD &  $ 1.49 \times 10^{3} $  & 39 & \cellcolor{blue!25}7.52 &  $ 2.69 \times 10^{3} $  \\

10-norm BD &  $ 9.09 \times 10^{7} $  & 846 & 0.342 & \cellcolor{blue!25}0.00309 \\
\hline
\end{tabular} 
\end{table}%

\begin{table}[!h]
\centering
\caption{MNIST Initial Point 6}
\label{MNIST_init6}
\begin{tabular}{ccccc}
\hline
 & SMD 1-norm & SMD 2-norm (SGD) & SMD 3-norm & SMD 10-norm \\
\hline
 1-norm BD & \cellcolor{blue!25}2.96 & 930 &  $ 1.08 \times 10^{4} $  &  $ 1.9 \times 10^{5} $  \\

2-norm BD & 308 & \cellcolor{blue!25}59 & 271 &  $ 2.12 \times 10^{4} $  \\

3-norm BD &  $ 1.63 \times 10^{3} $  & 38.6 & \cellcolor{blue!25}7.46 &  $ 2.47 \times 10^{3} $  \\

10-norm BD &  $ 1.65 \times 10^{8} $  & 864 & 0.334 & \cellcolor{blue!25}0.00295 \\
\hline
\end{tabular} 
\end{table}%
\newpage

\subsubsection{Closest Minimum for Different Initilizations with Fixed Mirror}
We provide the pairwise distances between different initial points and the final points (global minima) obtained by using fixed SMD algorithms in MNIST dataset using a standard CNN. Note that the smallest element in each row is on the diagonal, which means the closest final point to each initialization, among all the final points, is the one corresponding to that point. Tables~\ref{MNIST-1BD}, \ref{MNIST-2BD}, \ref{MNIST-3BD} and \ref{MNIST-10BD} depict these results for 4 different SMD algorithms. The rows are the initial points and the columns are the final points corresponding to each initialization.

\begin{table}[!h]
\centering
\caption{MNIST 1-norm Bregman Divergence Between the Initial Points and the Final Points obtained by SMD 1-norm}
\label{MNIST-1BD}
\begin{tabular}{ccccccc}
\hline
& Final 1 & Final 2 & Final 3 & Final 4 & Final 5 & Final 6 \\
\hline
Initial Point 1 & \cellcolor{blue!25}2.7671 & 20311 & 20266 & 20331 & 20340 & 20282 \\

Initial Point 2 & 20332 & \cellcolor{blue!25}2.7774 & 20281 & 20299 & 20312 & 20323 \\

Initial Point 3 & 20319 & 20312 & \cellcolor{blue!25}3.018 & 20344 & 20309 & 20322 \\

Initial Point 4 & 20339 & 20279 & 20310 & \cellcolor{blue!25}2.781 & 20321 & 20297 \\

Initial Point 5 & 20347 & 20317 & 20273 & 20316 & \cellcolor{blue!25}2.7902 & 20311 \\

Initial Point 6 & 20344 & 20323 & 20340 & 20318 & 20321 & \cellcolor{blue!25}2.964 \\
\hline
\end{tabular} 
\end{table}%
\begin{table}[ht]
\centering
\caption{MNIST 2-norm Bregman Divergence Between the Initial Points and the Final Points obtained by SMD 2-norm (SGD)}
\label{MNIST-2BD}
\begin{tabular}{ccccccc}
\hline
& Final 1 & Final 2 & Final 3 & Final 4 & Final 5 & Final 6 \\
\hline
Initial Point 1 & \cellcolor{blue!25}58.608 & 670.75 & 667.03 & 684.18 & 671.36 & 667.84 \\

Initial Point 2 & 669.84 & \cellcolor{blue!25}59.315 & 669.16 & 682.04 & 669.45 & 669.98 \\

Initial Point 3 & 666.35 & 670.22 & \cellcolor{blue!25}60.858 & 683.44 & 667.57 & 669.99 \\

Initial Point 4 & 669.71 & 668.86 & 671.19 & \cellcolor{blue!25}77.275 & 670.33 & 669.7 \\

Initial Point 5 & 671.1 & 669.12 & 668.45 & 683.61 & \cellcolor{blue!25}60.39 & 666.04 \\

Initial Point 6 & 669.46 & 670.92 & 671.59 & 684.32 & 667.37 & \cellcolor{blue!25}59.043 \\

\hline
\end{tabular} 
\end{table}%

\begin{table}[ht]
\centering
\caption{MNIST 3-norm Bregman Divergence Between the Initial Points and the Final Points obtained by SMD 3-norm}
\label{MNIST-3BD}
\begin{tabular}{ccccccc}
\hline
& Final 1 & Final 2 & Final 3 & Final 4 & Final 5 & Final 6 \\
\hline
Initial Point 1 & \cellcolor{blue!25}7.143 & 35.302 & 32.077 & 32.659 & 32.648 & 32.309 \\

Initial Point 2 & 32.507 & \cellcolor{blue!25}11.578 & 32.256 & 32.325 & 32.225 & 32.46 \\

Initial Point 3 & 31.594 & 34.643 & \cellcolor{blue!25}7.8239 & 32.521 & 31.58 & 32.519 \\

Initial Point 4 & 32.303 & 34.811 & 32.937 & \cellcolor{blue!25}7.5589 & 32.617 & 32.284 \\

Initial Point 5 & 32.673 & 34.678 & 32.071 & 32.738 & \cellcolor{blue!25}7.5188 & 31.558 \\

Initial Point 6 & 32.116 & 34.731 & 32.376 & 32.431 & 31.699 & \cellcolor{blue!25}7.4593 \\
\hline
\end{tabular} 
\end{table}%

\begin{table}[!h]
\centering
\caption{MNIST 10-norm Bregman Divergence Between the Initial Points and the Final Points obtained by SMD 10-norm}
\label{MNIST-10BD}
\begin{tabular}{ccccccc}
\hline
& Final 1 & Final 2 & Final 3 & Final 4 & Final 5 & Final 6 \\
\hline
Initial Point 1 & \cellcolor{blue!25}0.00354 & 0.37 & 0.403 & 0.286 & 0.421 & 0.408 \\

Initial Point 2 & 0.33 & \cellcolor{blue!25}0.00321 & 0.369 & 0.383 & 0.415 & 0.422 \\

Initial Point 3 & 0.347 & 0.318 & \cellcolor{blue!25}0.00318 & 0.401 & 0.312 & 0.406 \\

Initial Point 4 & 0.282 & 0.38 & 0.458 & \cellcolor{blue!25}0.00296 & 0.491 & 0.376 \\

Initial Point 5 & 0.405 & 0.418 & 0.354 & 0.484 & \cellcolor{blue!25}0.00309 & 0.48 \\

Initial Point 6 & 0.403 & 0.353 & 0.422 & 0.331 & 0.503 & \cellcolor{blue!25}0.00295 \\
\hline
\end{tabular} 
\end{table}%
 
\subsubsection{Closest Minimum for Different Initilizations and Different Mirrors}
Now we assess the pairwise distances between different initial points and final points (global minima) obtained by all different initilizations and all different mirrors (Table~\ref{fig:MNIST_full}). The smallest element in each row is exactly the final point obtained by that mirror from that initialization, among all the mirrors and all the initial points. 

\newpage
  
\begin{sidewaysfigure}[p]
\begin{center}
\includegraphics[width = 9.5in]{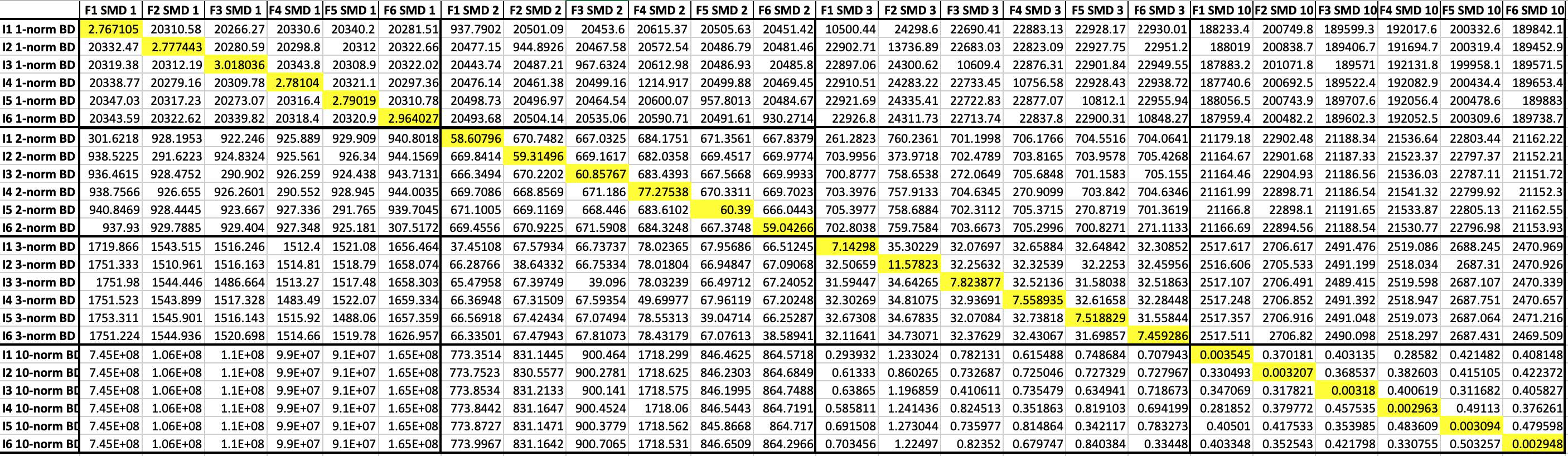}
\end{center}
\caption{Different Bregman divergences between all the final points and all the initial points for different mirrors in MNIST dataset using a standard CNN. Note that the smallest element in every single row is on the diagonal, which confirms the theoretical results.}\label{fig:MNIST_full}
\end{sidewaysfigure} 
 
\clearpage

\subsection{CIFAR-10 Experiments}
\subsubsection{Closest Minimum for Different Mirror Descents with Fixed Initialization}
We provide the distances from final points (global minima) obtained by different algorithms from the same initialization, measured in different Bregman divergences for CIFAR-10 classification task using ResNet-18. Note that in all tables the smallest element in each row is on the diagonal, which means the point achieved by each mirror has the smallest Bregman divergence to the initialization corresponding to that mirror, among all mirrors. Tables~\ref{CIFAR_init1}, \ref{CIFAR_init2}, \ref{CIFAR_init3}, \ref{CIFAR_init4}, \ref{CIFAR_init5}, \ref{CIFAR_init6}, \ref{CIFAR_init7}, \ref{CIFAR_init8} depict these results for 8 different initializations. The rows are the distance metrics used as the Bregman Divergences with specified potentials. The columns are the global minima obtained using specified SMD algorithms.

\begin{table}[!h]
\centering
\caption{CIFAR-10 Initial Point 1}
\label{CIFAR_init1}
\begin{tabular}{ccccc}
\hline
 & SMD 1-norm & SMD 2-norm (SGD) & SMD 3-norm & SMD 10-norm \\
\hline
 1-norm BD & \cellcolor{blue!25}189 &  $ 9.58 \times 10^{3} $  &  $ 4.19 \times 10^{4} $  &  $ 2.34 \times 10^{5} $  \\

2-norm BD &  $ 3.12 \times 10^{3} $  & \cellcolor{blue!25}597 &  $ 1.28 \times 10^{3} $  &  $ 6.92 \times 10^{3} $  \\

3-norm BD &  $ 4.31 \times 10^{4} $  & 119 & \cellcolor{blue!25}55.8 &  $ 1.87 \times 10^{2} $  \\

10-norm BD &  $ 1.35 \times 10^{14} $  & 869 &  $ 6.34 \times 10^{-5} $  &  \cellcolor{blue!25}$ 2.64 \times 10^{-8} $  \\
\hline
\end{tabular} 
\end{table}%

\begin{table}[!h]
\centering
\caption{CIFAR-10 Initial Point 2}
\label{CIFAR_init2}
\begin{tabular}{ccccc}
\hline
 & SMD 1-norm & SMD 2-norm (SGD) & SMD 3-norm & SMD 10-norm \\
\hline
 1-norm BD & \cellcolor{blue!25}275 &  $ 9.86 \times 10^{3} $  &  $ 4.09 \times 10^{4} $  &  $ 2.38 \times 10^{5} $  \\

2-norm BD &  $ 4.89 \times 10^{3} $  & \cellcolor{blue!25}607 &  $ 1.23 \times 10^{3} $  &  $ 7.03 \times 10^{3} $  \\

3-norm BD &  $ 9.21 \times 10^{4} $  & 104 & \cellcolor{blue!25}53.5 &  $ 1.88 \times 10^{2} $  \\

10-norm BD &  $ 1.17 \times 10^{15} $  & 225 & 0.000102 & \cellcolor{blue!25} $ 2.65 \times 10^{-8} $  \\
\hline
\end{tabular} 
\end{table}%

\begin{table}[!h]
\centering
\caption{CIFAR-10 Initial Point 3}
\label{CIFAR_init3}
\begin{tabular}{ccccc}
\hline
 & SMD 1-norm & SMD 2-norm (SGD) & SMD 3-norm & SMD 10-norm \\
\hline
 1-norm BD & \cellcolor{blue!25}141 &  $ 9.19 \times 10^{3} $  &  $ 4.1 \times 10^{4} $  &  $ 2.34 \times 10^{5} $  \\

2-norm BD &  $ 3.15 \times 10^{3} $  & \cellcolor{blue!25}562 &  $ 1.24 \times 10^{3} $  &  $ 6.89 \times 10^{3} $  \\

3-norm BD &  $ 4.31 \times 10^{4} $  & 107 & \cellcolor{blue!25}53.5 &  $ 1.85 \times 10^{2} $  \\

10-norm BD &  $ 6.83 \times 10^{13} $  & 972 &  $ 7.91 \times 10^{-5} $  &  \cellcolor{blue!25}$ 2.72 \times 10^{-8} $  \\
\hline
\end{tabular} 
\end{table}%

\begin{table}[!h]
\centering
\caption{CIFAR-10 Initial Point 4}
\label{CIFAR_init4}
\begin{tabular}{ccccc}
\hline
 & SMD 1-norm & SMD 2-norm (SGD) & SMD 3-norm & SMD 10-norm \\
\hline
 1-norm BD & \cellcolor{blue!25}255 &  $ 9.77 \times 10^{3} $  &  $ 4.18 \times 10^{4} $  &  $ 2.36 \times 10^{5} $  \\

2-norm BD &  $ 3.64 \times 10^{3} $  & \cellcolor{blue!25}594 &  $ 1.26 \times 10^{3} $  &  $ 6.96 \times 10^{3} $  \\

3-norm BD &  $ 5.5 \times 10^{4} $  & 116 & \cellcolor{blue!25}54 &  $ 1.87 \times 10^{2} $  \\

10-norm BD &  $ 3.74 \times 10^{14} $  & 640 &  $ 5.33 \times 10^{-5} $  &  \cellcolor{blue!25}$ 2.67 \times 10^{-8} $  \\
\hline
\end{tabular} 
\end{table}%

\begin{table}[!h]
\centering
\caption{CIFAR-10 Initial Point 5}
\label{CIFAR_init5}
\begin{tabular}{ccccc}
\hline
 & SMD 1-norm & SMD 2-norm (SGD) & SMD 3-norm & SMD 10-norm \\
\hline
 1-norm BD & \cellcolor{blue!25}113 &  $ 9.48 \times 10^{3} $  &  $ 4.15 \times 10^{4} $  &  $ 2.32 \times 10^{5} $  \\

2-norm BD &  $ 2.95 \times 10^{3} $  & \cellcolor{blue!25}572 &  $ 1.27 \times 10^{3} $  &  $ 6.85 \times 10^{3} $  \\

3-norm BD &  $ 3.68 \times 10^{4} $  & 109 & \cellcolor{blue!25}56.2 &  $ 1.84 \times 10^{2} $  \\

10-norm BD &  $ 2.97 \times 10^{13} $  & 151 &  $ 5.74 \times 10^{-5} $  &  \cellcolor{blue!25}$ 2.61 \times 10^{-8} $  \\

\hline
\end{tabular} 
\end{table}%

\begin{table}[!h]
\centering
\caption{CIFAR-10 Initial Point 6}
\label{CIFAR_init6}
\begin{tabular}{ccccc}
\hline
 & SMD 1-norm & SMD 2-norm (SGD) & SMD 3-norm & SMD 10-norm \\
\hline
 1-norm BD & \cellcolor{blue!25} 128 &  $ 9.25 \times 10^{3} $  &  $ 4.25 \times 10^{4} $  &  $ 2.34 \times 10^{5} $  \\

2-norm BD &  $ 2.71 \times 10^{3} $  & \cellcolor{blue!25} 558 &  $ 1.29 \times 10^{3} $  &  $ 6.89 \times 10^{3} $  \\

3-norm BD &  $ 3.34 \times 10^{4} $  & 104 & \cellcolor{blue!25} 55.3 &  $ 1.85 \times 10^{2} $  \\

10-norm BD &  $ 2.61 \times 10^{13} $  & 612 &  $ 4.74 \times 10^{-5} $  &  \cellcolor{blue!25} $ 2.62 \times 10^{-8} $  \\
\hline
\end{tabular} 
\end{table}%

\begin{table}[!h]
\centering
\caption{CIFAR-10 Initial Point 7}
\label{CIFAR_init7}
\begin{tabular}{ccccc}
\hline
 & SMD 1-norm & SMD 2-norm (SGD) & SMD 3-norm & SMD 10-norm \\
\hline
 1-norm BD & \cellcolor{blue!25} 223 &  $ 9.76 \times 10^{3} $  &  $ 4.38 \times 10^{4} $  &  $ 2.27 \times 10^{5} $  \\

2-norm BD &  $ 2.41 \times 10^{3} $  & \cellcolor{blue!25} 599 &  $ 1.37 \times 10^{3} $  &  $ 6.65 \times 10^{3} $  \\

3-norm BD &  $ 2.3 \times 10^{4} $  & 116 & \cellcolor{blue!25} 61 &  $ 1.78 \times 10^{2} $  \\

10-norm BD &  $ 4.22 \times 10^{12} $  & 679 &  $ 6.42 \times 10^{-5} $  &  \cellcolor{blue!25} $ 2.55 \times 10^{-8} $  \\

\hline
\end{tabular} 
\end{table}%

\begin{table}[!h]
\centering
\caption{CIFAR-10 Initial Point 8}
\label{CIFAR_init8}
\begin{tabular}{ccccc}
\hline
 & SMD 1-norm & SMD 2-norm (SGD) & SMD 3-norm & SMD 10-norm \\
\hline
 1-norm BD & \cellcolor{blue!25} 145 &  $ 9.37 \times 10^{3} $  &  $ 4.17 \times 10^{4} $  &  $ 2.36 \times 10^{5} $  \\

2-norm BD &  $ 2.48 \times 10^{3} $  & \cellcolor{blue!25} 576 &  $ 1.26 \times 10^{3} $  &  $ 6.99 \times 10^{3} $  \\

3-norm BD &  $ 2.85 \times 10^{4} $  & 108 & \cellcolor{blue!25} 54.5 &  $ 1.89 \times 10^{2} $  \\

10-norm BD &  $ 1.81 \times 10^{13} $  &  $ 1.22 \times 10^{3} $  &  $ 5.2 \times 10^{-5} $  & \cellcolor{blue!25} $ 2.64 \times 10^{-8} $  \\

\hline
\end{tabular} 
\end{table}%
\newpage

\subsubsection{Closest Minimum for Different Initilizations with Fixed Mirror}
We provide the pairwise distances between different initial points and the final points (global minima) obtained by using fixed SMD algorithms in CIFAR-10 dataset using ResNet-18. Note that the smallest element in each row is on the diagonal, which means the closest final point to each initialization, among all the final points, is the one corresponding to that point. Tables~\ref{CIFAR10-1BD}, \ref{CIFAR10-2BD}, \ref{CIFAR10-3BD}, \ref{CIFAR10-10BD} depict these results for 4 different SMD algorithms. The rows are the initial points and the columns are the final points corresponding to each initialization.

\clearpage

\begin{table}[!h]
\centering
\caption{CIFAR-10 1-norm Bregman Divergence Between the Initial Points and the Final Points obtained by SMD 1-norm}
\label{CIFAR10-1BD}
\resizebox{\columnwidth}{!}{
\begin{tabular}{ccccccccc}
\hline
& Final 1 & Final 2 & Final 3 & Final 4 & Final 5 & Final 6 & Final 7 & Final 8 \\
\hline
Initial 1 &  \cellcolor{blue!25}$ 1.9 \times 10^{2} $  &  $ 8.1 \times 10^{4} $  &  $ 8.1 \times 10^{4} $  &  $ 8.4 \times 10^{4} $  &  $ 8 \times 10^{4} $  &  $ 8.2 \times 10^{4} $  &  $ 7.8 \times 10^{4} $  &  $ 7.8 \times 10^{4} $  \\

Initial 2 &  $ 8.1 \times 10^{4} $  &  \cellcolor{blue!25} $ 2.7 \times 10^{2} $  &  $ 8.1 \times 10^{4} $  &  $ 8.3 \times 10^{4} $  &  $ 8 \times 10^{4} $  &  $ 8.2 \times 10^{4} $  &  $ 7.8 \times 10^{4} $  &  $ 7.9 \times 10^{4} $  \\

Initial 3 &  $ 8.1 \times 10^{4} $  &  $ 8.1 \times 10^{4} $  & \cellcolor{blue!25} $ 1.4 \times 10^{2} $  &  $ 8.4 \times 10^{4} $  &  $ 8 \times 10^{4} $  &  $ 8.1 \times 10^{4} $  &  $ 7.8 \times 10^{4} $  &  $ 7.8 \times 10^{4} $  \\

Initial 4 &  $ 8.1 \times 10^{4} $  &  $ 8.1 \times 10^{4} $  &  $ 8.1 \times 10^{4} $  & \cellcolor{blue!25} $ 2.5 \times 10^{2} $  &  $ 8 \times 10^{4} $  &  $ 8.2 \times 10^{4} $  &  $ 7.8 \times 10^{4} $  &  $ 7.9 \times 10^{4} $  \\

Initial 5 &  $ 8.1 \times 10^{4} $  &  $ 8.1 \times 10^{4} $  &  $ 8.1 \times 10^{4} $  &  $ 8.3 \times 10^{4} $  & \cellcolor{blue!25} $ 1.1 \times 10^{2} $  &  $ 8.1 \times 10^{4} $  &  $ 7.8 \times 10^{4} $  &  $ 7.8 \times 10^{4} $  \\

Initial 6 &  $ 8.1 \times 10^{4} $  &  $ 8.1 \times 10^{4} $  &  $ 8.1 \times 10^{4} $  &  $ 8.4 \times 10^{4} $  &  $ 8 \times 10^{4} $  &  \cellcolor{blue!25} $ 1.3 \times 10^{2} $  &  $ 7.8 \times 10^{4} $  &  $ 7.8 \times 10^{4} $  \\

Initial 7 &  $ 8.1 \times 10^{4} $  &  $ 8.1 \times 10^{4} $  &  $ 8.1 \times 10^{4} $  &  $ 8.4 \times 10^{4} $  &  $ 8 \times 10^{4} $  &  $ 8.1 \times 10^{4} $  & \cellcolor{blue!25} $ 2.2 \times 10^{2} $  &  $ 7.8 \times 10^{4} $  \\

Initial 8 &  $ 8.1 \times 10^{4} $  &  $ 8.1 \times 10^{4} $  &  $ 8.1 \times 10^{4} $  &  $ 8.4 \times 10^{4} $  &  $ 7.9 \times 10^{4} $  &  $ 8.1 \times 10^{4} $  &  $ 7.8 \times 10^{4} $  &  \cellcolor{blue!25} $ 1.5 \times 10^{2} $  \\

\hline
\end{tabular} }
\end{table}%

\begin{table}[!h]
\centering
\caption{CIFAR-10 2-norm Bregman Divergence Between the Initial Points and the Final Points obtained by SMD 2-norm (SGD)}
\label{CIFAR10-2BD}
\resizebox{\columnwidth}{!}{
\begin{tabular}{ccccccccc}
\hline
& Final 1 & Final 2 & Final 3 & Final 4 & Final 5 & Final 6 & Final 7 & Final 8 \\
\hline
Initial 1 & \cellcolor{blue!25} $ 6 \times 10^{2} $  &  $ 2.9 \times 10^{3} $  &  $ 2.8 \times 10^{3} $  &  $ 2.8 \times 10^{3} $  &  $ 2.8 \times 10^{3} $  &  $ 2.8 \times 10^{3} $  &  $ 2.8 \times 10^{3} $  &  $ 2.8 \times 10^{3} $  \\

Initial 2 &  $ 2.8 \times 10^{3} $  &  \cellcolor{blue!25} $ 6.1 \times 10^{2} $  &  $ 2.8 \times 10^{3} $  &  $ 2.8 \times 10^{3} $  &  $ 2.8 \times 10^{3} $  &  $ 2.8 \times 10^{3} $  &  $ 2.8 \times 10^{3} $  &  $ 2.8 \times 10^{3} $  \\

Initial 3 &  $ 2.8 \times 10^{3} $  &  $ 2.9 \times 10^{3} $  & \cellcolor{blue!25} $ 5.6 \times 10^{2} $  &  $ 2.8 \times 10^{3} $  &  $ 2.8 \times 10^{3} $  &  $ 2.8 \times 10^{3} $  &  $ 2.8 \times 10^{3} $  &  $ 2.8 \times 10^{3} $  \\

Initial 4 &  $ 2.8 \times 10^{3} $  &  $ 2.9 \times 10^{3} $  &  $ 2.8 \times 10^{3} $  & \cellcolor{blue!25} $ 5.9 \times 10^{2} $  &  $ 2.8 \times 10^{3} $  &  $ 2.8 \times 10^{3} $  &  $ 2.8 \times 10^{3} $  &  $ 2.8 \times 10^{3} $  \\

Initial 5 &  $ 2.8 \times 10^{3} $  &  $ 2.9 \times 10^{3} $  &  $ 2.8 \times 10^{3} $  &  $ 2.8 \times 10^{3} $  & \cellcolor{blue!25} $ 5.7 \times 10^{2} $  &  $ 2.8 \times 10^{3} $  &  $ 2.8 \times 10^{3} $  &  $ 2.8 \times 10^{3} $  \\

Initial 6 &  $ 2.8 \times 10^{3} $  &  $ 2.9 \times 10^{3} $  &  $ 2.8 \times 10^{3} $  &  $ 2.8 \times 10^{3} $  &  $ 2.8 \times 10^{3} $  &  \cellcolor{blue!25} $ 5.6 \times 10^{2} $  &  $ 2.8 \times 10^{3} $  &  $ 2.8 \times 10^{3} $  \\

Initial 7 &  $ 2.8 \times 10^{3} $  &  $ 2.9 \times 10^{3} $  &  $ 2.8 \times 10^{3} $  &  $ 2.8 \times 10^{3} $  &  $ 2.8 \times 10^{3} $  &  $ 2.8 \times 10^{3} $  & \cellcolor{blue!25} $ 6 \times 10^{2} $  &  $ 2.8 \times 10^{3} $  \\

Initial 8 &  $ 2.8 \times 10^{3} $  &  $ 2.9 \times 10^{3} $  &  $ 2.8 \times 10^{3} $  &  $ 2.8 \times 10^{3} $  &  $ 2.8 \times 10^{3} $  &  $ 2.8 \times 10^{3} $  &  $ 2.8 \times 10^{3} $  & \cellcolor{blue!25} $ 5.8 \times 10^{2} $  \\
\hline
\end{tabular} }
\end{table}%

\begin{table}[!h]
\centering
\caption{CIFAR-10 3-norm Bregman Divergence Between the Initial Points and the Final Points obtained by SMD 3-norm}
\label{CIFAR10-3BD}
\resizebox{\columnwidth}{!}{
\begin{tabular}{ccccccccc}
\hline
& Final 1 & Final 2 & Final 3 & Final 4 & Final 5 & Final 6 & Final 7 & Final 8 \\
\hline
Initial 1 & \cellcolor{blue!25} 55.844 & 103.47 & 103.61 & 104.05 & 106.2 & 105.32 & 110.88 & 104.56 \\

Initial 2 & 105.87 & \cellcolor{blue!25} 53.455 & 103.68 & 104.04 & 106.31 & 105.34 & 110.93 & 104.58 \\

Initial 3 & 105.89 & 103.59 & \cellcolor{blue!25} 53.527 & 104.09 & 106.29 & 105.35 & 110.99 & 104.55 \\

Initial 4 & 105.83 & 103.54 & 103.64 & \cellcolor{blue!25} 53.978 & 106.23 & 105.3 & 110.87 & 104.54 \\

Initial 5 & 105.82 & 103.55 & 103.64 & 104 & \cellcolor{blue!25} 56.161 & 105.34 & 110.88 & 104.55 \\

Initial 6 & 105.91 & 103.6 & 103.66 & 104.1 & 106.28 & \cellcolor{blue!25} 55.316 & 110.94 & 104.55 \\

Initial 7 & 105.87 & 103.51 & 103.67 & 103.98 & 106.26 & 105.25 & \cellcolor{blue!25} 61.045 & 104.5 \\

Initial 8 & 105.77 & 103.54 & 103.59 & 104.04 & 106.25 & 105.28 & 110.88 & \cellcolor{blue!25} 54.509 \\

\hline
\end{tabular} }
\end{table}%

\clearpage

\begin{table}[!h]
\centering
\caption{CIFAR-10 10-norm Bregman Divergence Between the Initial Points and the Final Points obtained by SMD 10-norm}
\label{CIFAR10-10BD}
\resizebox{\columnwidth}{!}{
\begin{tabular}{ccccccccc}
\hline
& Final 1 & Final 2 & Final 3 & Final 4 & Final 5 & Final 6 & Final 7 & Final 8 \\
\hline
Initial 1 & \cellcolor{blue!25} $ 2.64 \times 10^{-8} $ & $ 2.89 \times 10^{-8} $ & $ 2.99 \times 10^{-8} $ & $ 2.81 \times 10^{-8} $ & $ 2.85 \times 10^{-8} $ & $ 2.82 \times 10^{-8} $ & $ 2.66 \times 10^{-8} $ & $ 2.82 \times 10^{-8} $ \\

Initial 2 & $ 2.79 \times 10^{-8} $ & \cellcolor{blue!25} $ 2.65 \times 10^{-8} $ & $ 2.83 \times 10^{-8} $ & $ 2.83 \times 10^{-8} $ & $ 2.71 \times 10^{-8} $ & $ 2.74 \times 10^{-8} $ & $ 2.69 \times 10^{-8} $ & $ 2.88 \times 10^{-8} $ \\

Initial 3 & $ 2.89 \times 10^{-8} $ & $ 2.87 \times 10^{-8} $ & \cellcolor{blue!25} $ 2.72 \times 10^{-8} $ & $ 2.94 \times 10^{-8} $ & $ 2.84 \times 10^{-8} $ & $ 2.89 \times 10^{-8} $ & $ 2.78 \times 10^{-8} $ & $ 2.94 \times 10^{-8} $ \\

Initial 4 & $ 2.79 \times 10^{-8} $ & $ 2.86 \times 10^{-8} $ & $ 2.92 \times 10^{-8} $ & \cellcolor{blue!25} $ 2.67 \times 10^{-8} $ & $ 2.84 \times 10^{-8} $ & $ 2.81 \times 10^{-8} $ & $ 2.69 \times 10^{-8} $ & $ 2.85 \times 10^{-8} $ \\

Initial 5 & $ 2.76 \times 10^{-8} $ & $ 2.88 \times 10^{-8} $ & $ 2.95 \times 10^{-8} $ & $ 2.93 \times 10^{-8} $ & \cellcolor{blue!25} $ 2.61 \times 10^{-8} $ & $ 2.73 \times 10^{-8} $ & $ 2.66 \times 10^{-8} $ & $ 2.83 \times 10^{-8} $ \\

Initial 6 & $ 2.80 \times 10^{-8} $ & $ 2.76 \times 10^{-8} $ & $ 2.93 \times 10^{-8} $ & $ 2.79 \times 10^{-8} $ & $ 2.76 \times 10^{-8} $ & \cellcolor{blue!25} $ 2.62 \times 10^{-8} $ & $ 2.71 \times 10^{-8} $ & $ 2.85 \times 10^{-8} $ \\

Initial 7 & $ 2.73 \times 10^{-8} $ & $ 2.76 \times 10^{-8} $ & $ 2.82 \times 10^{-8} $ & $ 2.79 \times 10^{-8} $ & $ 2.71 \times 10^{-8} $ & $ 2.77 \times 10^{-8} $ & \cellcolor{blue!25} $ 2.55 \times 10^{-8} $ & $ 2.83 \times 10^{-8} $ \\

Initial 8 & $ 2.73 \times 10^{-8} $ & $ 2.79 \times 10^{-8} $ & $ 2.85 \times 10^{-8} $ & $ 2.78 \times 10^{-8} $ & $ 2.75 \times 10^{-8} $ & $ 2.74 \times 10^{-8} $ & $ 2.73 \times 10^{-8} $ & \cellcolor{blue!25} $ 2.64 \times 10^{-8} $ \\

\hline
\end{tabular} }
\end{table}%

\subsubsection{Closest Minimum for Different Initilizations and Different Mirrors}
Now we assess the pairwise distances between different initial points and final points (global minima) obtained by all different initilizations and all different mirrors (Table~\ref{fig:MNIST_full}). The smallest element in each row is exactly the final point obtained by that mirror from that initialization, among all the mirrors and all the initial points. 

\clearpage
  
\begin{sidewaysfigure}[p]
\begin{center}
\includegraphics[width = 9.5in]{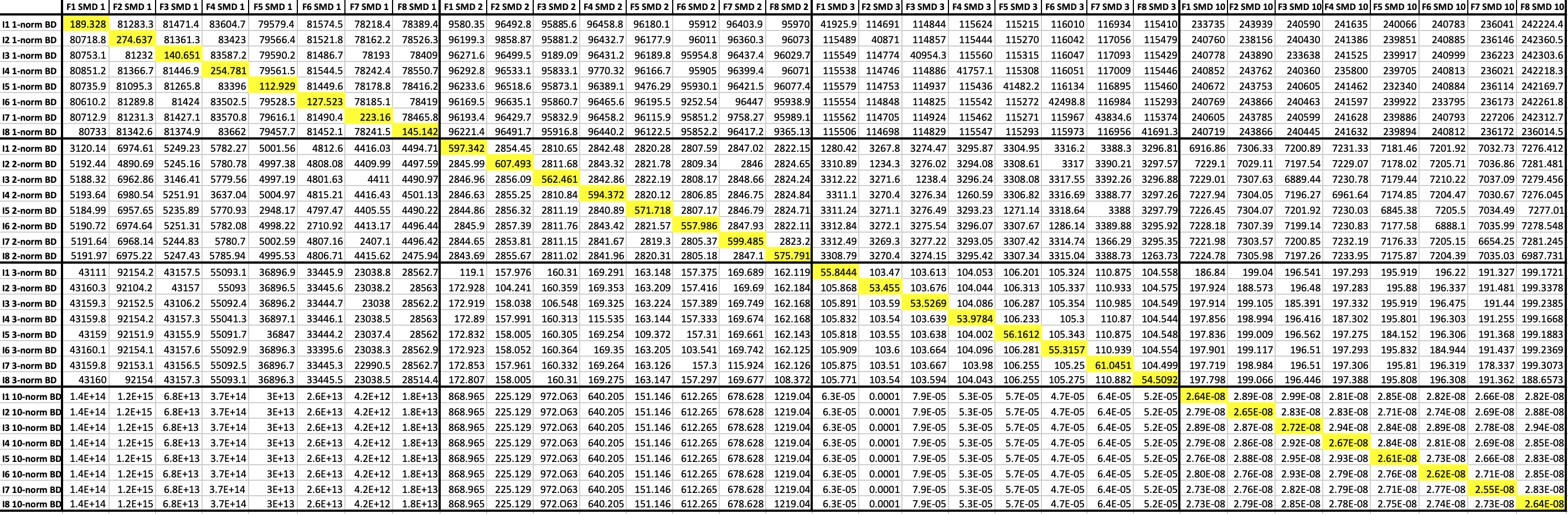}
\end{center}
\caption{Different Bregman divergences between all the final points and all the initial points for different mirrors in CIFAR-10 dataset using ResNet-18. Note that the smallest element in every single row is on the diagonal, which confirms the theoretical results.}\label{fig:CIFAR-10_full}
\end{sidewaysfigure} 
 
\clearpage


\clearpage

\begin{figure}[!htb]
\begin{center}
\includegraphics[width=0.7\columnwidth]{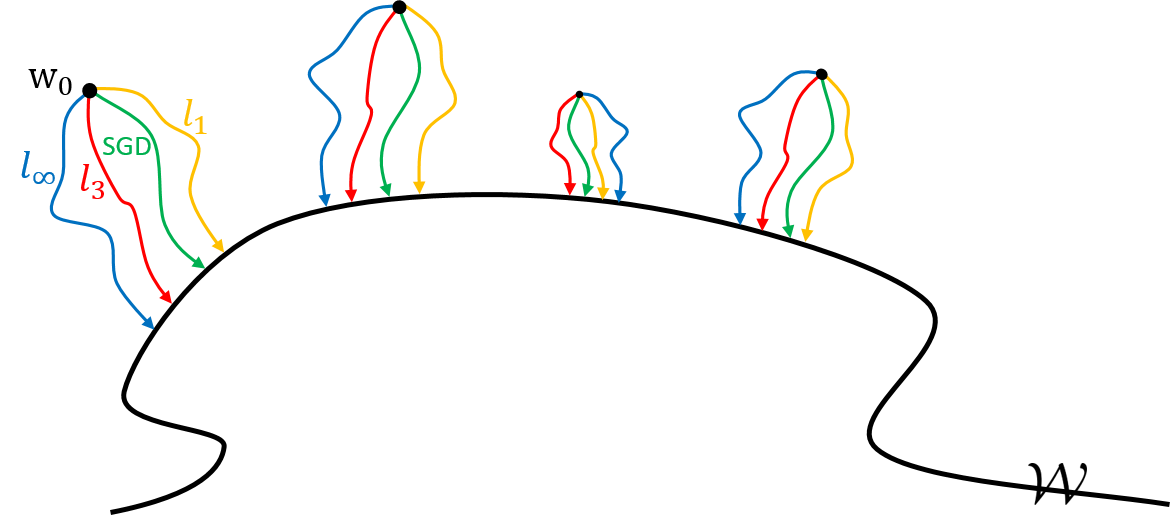}
\end{center}
\caption{An illustration of the experimental results. For each initialization $w_0$, we ran different SMD algorithms until convergence to a point on the set $\mathcal{W}$ (zero training error). We then measured all the pairwise distances from different $w_{\infty}$ to different $w_{0}$, in different Bregman divergences. The closest point (among all initializations and all mirrors) to any particular initialization $w_0$ in Bregman divergence with potential $\psi(\cdot)=\|\cdot\|_q^q$ is exactly the point obtained by running SMD with potential $\|\cdot\|_q^q$ from $w_0$.} \label{fig:mirrors}
\end{figure}

\subsection{Distribution of the Final Weights of the Network}
One may be curious to see how the final weights obtained by these different mirrors look like, and whether, for example, mirror descent corresponding to the $\ell_1$-norm potential induces sparsity. We examine the distribution of the weights in the network for these algorithms starting from the same initialization. Fig.~\ref{fig:hist_init} shows the histogram of the initial weights, which follows a half-normal distribution. Figs.~\ref{fig:hist}~(a), (b), (c), (d) show the histogram of the weights for $\ell_1$-SMD, $\ell_2$-SMD (SGD), $\ell_3$-SMD, and $\ell_{10}$-SMD, respectively. Note that each of the four histograms corresponds to an $11\times 10^6$-dimensional weight vector that perfectly interpolates the data. Even though, perhaps as expected, the weights remain quite small, the histograms are drastically different.
The histogram of the $\ell_1$-SMD has more weights at and close to zero, which again confirms that it induces sparsity. However, as will be shown in the next section, this is not necessarily good for generalization (in fact, it turns out that $\ell_{10}$-SMD has a much better generalization).
The histogram of the $\ell_2$-SMD (SGD) looks almost identical to the histogram of the initialization, whereas the $\ell_3$ and $\ell_{10}$ histograms are shifted to the right, so much so that almost all weights in the $\ell_{10}$ solution are non-zero and in the range of 0.005 to 0.04.
For comparison, all the distributions are shown together in Fig.~\ref{fig:hist}(e). 

\begin{figure}[!hb]
\begin{center}
\includegraphics[height = 5.5cm]{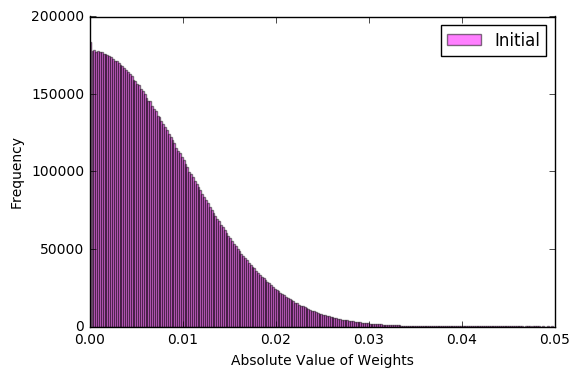}
\end{center}
\caption{Histogram of the absolute value of the initial weights in the network (half-normal distribution)}\label{fig:hist_init}
\end{figure} 

\begin{figure}[!h]
  \begin{minipage}[]{0.5\textwidth}
     \centering
	 \includegraphics[height=5.5cm]{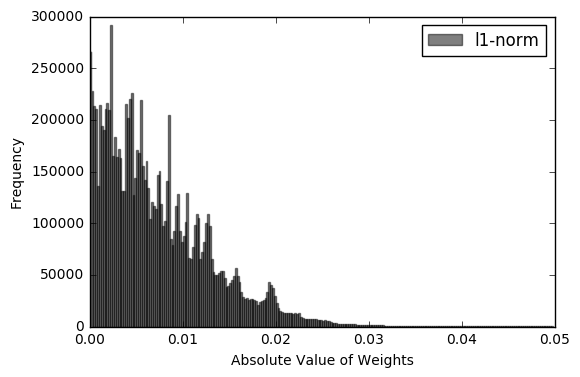}\\
     \textbf{(a)}
  \end{minipage}
  \begin{minipage}{0.5\textwidth}
	 \centering
    \includegraphics[ height=5.5cm]{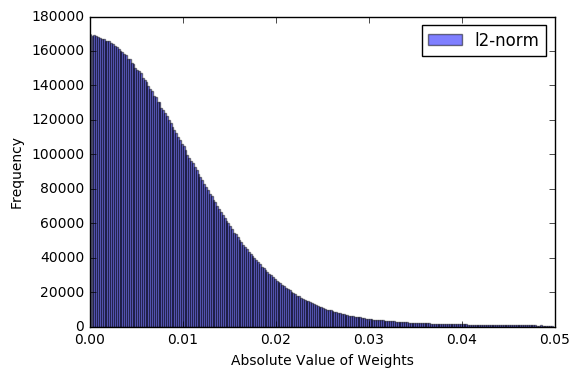}\\
     \textbf{(b)}
  \end{minipage}\\
  \begin{minipage}[]{0.5\textwidth}
     \centering
	 \includegraphics[height=5.5cm]{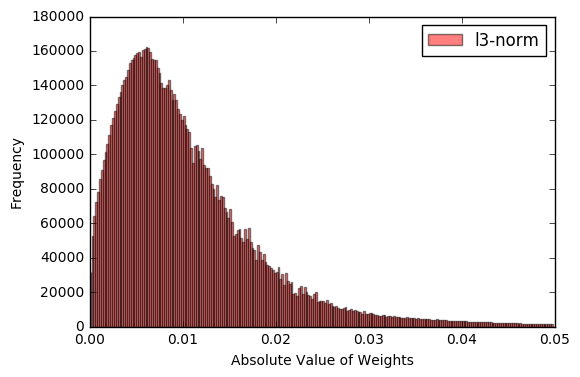}\\
     \textbf{(c)}
  \end{minipage}
  \begin{minipage}{0.5\textwidth}
	 \centering
    \includegraphics[ height=5.5cm]{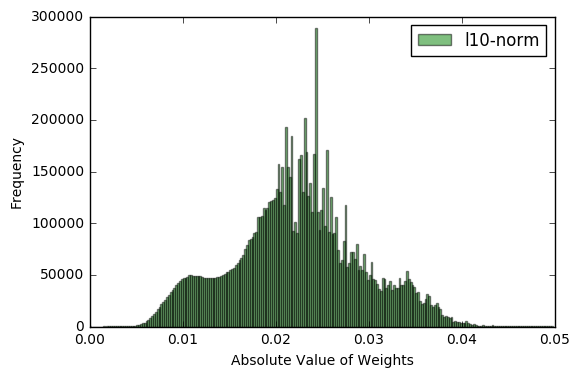}\\
     \textbf{(d)}
  \end{minipage}\\
  \begin{minipage}[]{\textwidth}
     \centering
	 \includegraphics[height=5.5cm]{comparison.png}\\
     \textbf{(e)}
  \end{minipage}
  
   \caption{Histogram of the absolute value of the final weights in the network for different SMD algorithms: (a) $\ell_{1}$-SMD, (b) $\ell_{2}$-SMD (SGD), (c) $\ell_{3}$-SMD, (d) $\ell_{10}$-SMD. Note that each of the four histograms corresponds to an $11\times 10^6$-dimensional weight vector that perfectly interpolates the data. Even though the weights remain quite small, the histograms are drastically different. $\ell_{1}$-SMD induces sparsity on the weights, as expected. SGD does not seem to change the distribution of the weights significantly. $\ell_{3}$-SMD starts to reduce the sparsity, and $\ell_{10}$ shifts the distribution of the weights significantly, so much so that almost all the weights are non-zero.}
\label{fig:hist}
\end{figure}

\clearpage

\subsection{Generalization Errors of Different Mirrors/Regularizers}
In this section, we compare the performance of the SMD algorithms discussed before on the test set. This is important for understanding the effect of different regularizers on the generalization of deep networks.

For MNIST, perhaps not surprisingly, all the four SMD algorithms achieve around $99\%$ or higher accuracy. For CIFAR-10, however, there is a significant difference between the test errors of different mirrors/regularizers on the same architecture. Fig.~\ref{fig:gener_appendix} shows the test accuracies of different algorithms with eight random initializations around zero, as discussed before.
Counter-intuitively, $\ell_{10}$ performs consistently best, while $\ell_1$ performs consistently worse. We should reiterate that the loss function is exactly the same in all these experiments, and all of them have been trained to fit the training set perfectly (zero training error). Therefore, the difference in generalization errors is purely the effect of implicit regularization by different algorithms. This result suggests the importance of a comprehensive study of the role of regularization, and the choice of the best regularizer, to improve the generalization performance of deep neural networks. 

\begin{figure}[!hb]
\begin{center}
\includegraphics[height = 10cm]{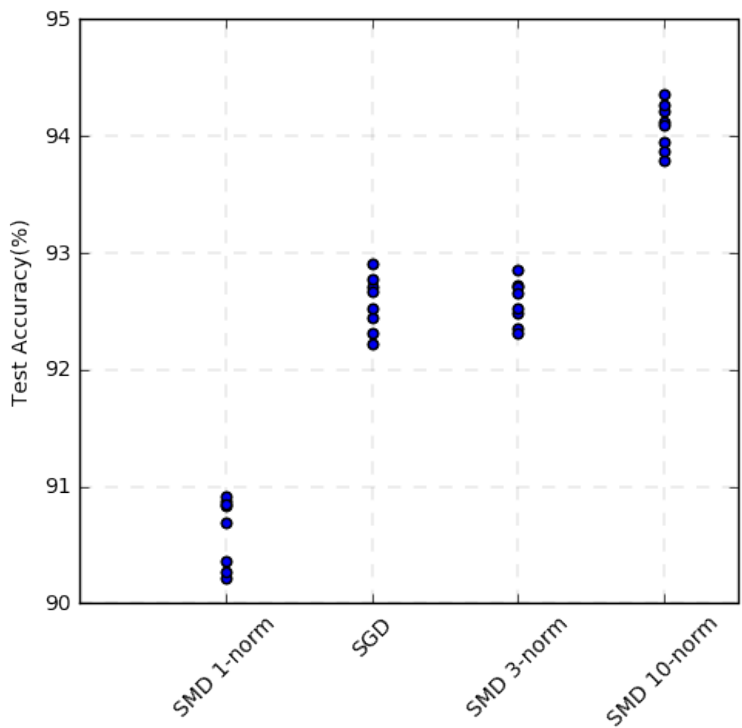}
\end{center}
\caption{Generalization performance of different SMD algorithms on the CIFAR-10 dataset using ResNet-18. $\ell_{10}$ performs consistently better, while $\ell_1$ performs consistently worse.}\label{fig:gener_appendix}
\end{figure}

\end{document}